\tiny\color{gray},
\newtheorem{theorem}{Theorem}[section]
\newtheorem{lemma}[theorem]{Lemma}
\newtheorem{proposition}[theorem]{Proposition}
\DeclarePairedDelimiter\abs{\lvert}{\rvert}
\newcommand{\sinceC}[1]{\textbf{\#}\!\left[#1\right]}
\newcommand{\Dppt}{\mathcal{D}_{\text{ppt}}}
\newcommand{\Dpt}{\mathcal{D}_{\text{pt}}}
\newcommand\CRASP{\textsf{C-RASP}}
\newcommand\KtSharp{\ensuremath{\mathsf{K}_t[\mathbf{\#}]}}
\newcommand\FOM{\textsf{FO(M)}}
\title{Between Circuits and Chomsky: \\ Pre-pretraining on Formal Languages Imparts Linguistic Biases}
\author{
    Michael Y. Hu$^{1}$ \quad Jackson Petty$^{2}$ \quad Chuan Shi$^{1}$ \quad William Merrill$^{1}$ \quad Tal Linzen$^{1,2}$ \\ \\
    $^{1}$Center for Data Science \quad $^{2}$Department of Linguistics \\
    New York University \\
    \texttt{\{michael.hu, petty, cs5526, willm, linzen\}@nyu.edu}
}
\begin{document}
\maketitle
\begin{abstract}

Pretraining language models on formal language can improve their acquisition of natural language. Which features of the formal language impart an inductive bias that leads to effective transfer?
Drawing on insights from linguistics and complexity theory, we hypothesize that effective transfer occurs when two conditions are met: the formal language should capture the dependency structures present in natural language, and it should remain within the computational limitations of the model architecture. 
We experiment with pre-pretraining (training on formal language before natural languages) on transformers and find that formal languages capturing hierarchical dependencies indeed enable language models to achieve lower loss on natural language and better linguistic generalization compared to other formal languages. 
We also find modest support for the hypothesis that the formal language should  fall within the computational limitations of the architecture.
Strikingly, pre-pretraining reduces loss more efficiently than training on a matched amount of natural language.
For a 1B-parameter language model trained on roughly 1.6B tokens of natural language, pre-pretraining achieves the same loss and better linguistic generalization with a 33\% smaller token budget. 
Finally, we also give mechanistic evidence of transfer from formal to natural language: attention heads acquired during pre-pretraining remain crucial for the model's performance on syntactic evaluations.\footnote{Code is available at \url{https://github.com/michahu/pre-pretraining}.}
%

\end{abstract}

\section{Introduction}

Language models have achieved impressive performance on many tasks, but they remain data-hungry, requiring five to six orders of magnitude more data than humans to achieve human-level performance \cite{warstadt-etal-2023-findings,Gilkerson2017MappingTE}. This high data requirement presents challenges for training models in low-resource settings \cite{Zhong2024OpportunitiesAC,Hettiarachchi2024OverviewOT}, understanding how language models can serve as cognitive models of language acquisition with human-like data constraints \cite{big-wilcox}, and continuing to improve models even after most of the existing natural language data has been used for pretraining \cite{Villalobos2022WillWR}. Thus, data efficiency during training is an important frontier for language models. 

\begin{figure}[t]
    \begin{tabularx}{0.95\columnwidth}{X||c|c}
    & Context-free & Context-sensitive \\ 
    \hline\hline
    \multirow{2}{*}{$\CRASP$} & \multirow{2}{*}{1-Dyck} & $k$-Shuffle \cellcolor[rgb]{0.937,0.922,0.99} \\ 
    & & Dyck \cellcolor[rgb]{0.937,0.922,0.99} \\
    \hline
    \multirow{2}{*}{$\FOM$}  & \multirow{2}{*}{$k$-Dyck} & \multirow{2}{*}{$ww$} \\ 
    & & \\ 
    \end{tabularx}
    \raggedleft
    \includegraphics[width=\columnwidth]{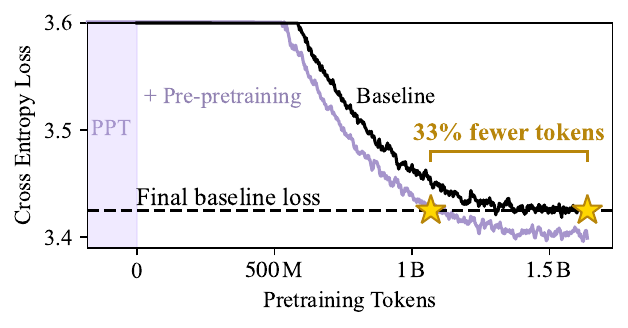}
    \caption{The intersection of Chomsky and circuit hierarchies (top), where $\CRASP \subset \FOM$ and context-free $\subset$ context-sensitive. Within this $2\times2$, we find that pre-pretraining on $k$-Shuffle Dyck, a context-sensitive language definable in $\CRASP$, lets 1B-parameter models match the final baseline performance of no pre-pretraining with 33\% fewer training tokens (bottom). See \S \ref{sec:methods-grammars}.}
    \label{fig:conceptual}
\end{figure}

A recently-explored approach for increasing data efficiency teaches models useful inductive biases by first training them on formal languages before training on natural language \cite{papadimitriou-jurafsky-2020-learning,Chiang2021OnTT,mccoy2023modelingrapidlanguagelearning}. We refer to this paradigm as pre-pretraining. What features of formal languages make transfer to natural language effective?
\citet{papadimitriou-jurafsky-2023-injecting} show that within the Chomsky hierarchy, context-sensitive languages transfer best to natural language compared to simpler classes of languages. We expand on their investigation and explore an additional factor: the computational limitations of the language model's architecture. In particular, transformers---the architecture that underlies most popular language models---cannot learn all context-sensitive languages, both in theory and practice \cite{strobl-etal-2024-formal,merrill2023a}. 
In fact, within all levels of the Chomsky hierarchy, some languages are harder for transformers to learn than others, and many are impossible for them to learn \cite{merrill2023talecircuitsgrokkingcompetition,merrill2024the}. 
Can a formal language give rise to positive transfer even when it cannot be fully learned by a transformer?

In this work, we hypothesize that optimal transfer from formal to natural language in transformer language models occurs at the intersection of two theoretical hierarchies: the Chomsky hierarchy of formal languages and the circuit complexity hierarchy that bounds transformer computational power (see~\S \ref{sec:methods}). Specifically, we hypothesize that effective pre-pretraining languages should be:
\vspace{0.25ex}
\begin{compactenum}
    \item expressive enough to capture hierarchical natural language dependencies, and
    \item learnable by transformers in a way that generalizes to longer strings than observed in training. 
\end{compactenum} 
\vspace{0.25ex}
To satisfy the second condition, we define our formal languages in $\CRASP$ \cite{yang2024counting}, a restricted programming language whose functions allow transformers to exhibit length generalization \cite{huang2025a}.

Our empirical results support the first part of the hypothesis and provide some support for the second part~(\S \ref{sec:expressivity}). Pre-pretraining on languages with hierarchical dependencies outperforms pre-pretraining on any of the other formal languages that we tested---in fact, it outperforms pre-pretraining on a matched amount of natural language. Of the formal languages with hierarchical dependencies, those that are definable in $\CRASP$ generally achieve equal or better performance, but they are only clearly superior on some of the tasks we evaluated.

Next, we show that when positive transfer occurs, the model reuses attention heads it learned during pre-pretraining, suggesting that mechanisms from pre-pretraining transfer to natural language (\S \ref{sec:pruning}). Finally, we scale up our experiments to a 1B-parameter language model, and show that in pre-pretraining is effective in that size as well, increasing token efficiency by 33\% (\S \ref{sec:analysis}). 
Overall, we conclude that formal language pre-pretraining is an effective way to improve generalization and data efficiency, and propose a hypothesis for the particular formal languages that are most promising for this purpose.




\section{Background}
\label{sec:background}


\subsection{The Chomsky Hierarchy}

The Chomsky hierarchy \cite{chomsky-hierarchy} is a nested classification of increasingly-complex formal languages. 
This classification is based on the kinds of computations needed to process formal structures resembling those found in human language. For example, regular languages, the least complex, can be recognized by finite-state automata. While regular languages can capture most phenomena in natural language phonology and morphology, they are insufficient for syntax: representing the hierarchical structure of natural language syntax with a finite-state automaton would require infinitely many states \cite{chomsky-three}. Subsequent works showed that modeling some syntactic phenomena requires not only context-free but also context-sensitive grammars \cite{Shieber1985EvidenceAT}, though the prevalence of such phenomena may be limited.

\paragraph{Dyck languages.} A classic context-free language is $k$-Dyck: the language of well-balanced parentheses with $k$ bracket types. For example, \texttt{([])[]} is a valid $2$-Dyck string, where rounded and square parentheses are the two bracket types. $k$-Dyck is often taken as a canonical example of context-free hierarchical structure because any context-free language can be reduced to Dyck via a single transformation (inverse homomorphism) and intersection with a regular language \citep{chomsky-1959-algebraic}.

\paragraph{Shuffle Dyck.} Removing the constraint that Dyck braces must be well-nested, but maintaining the constraint that every opening brace must be closed and vice versa, yields $k$-Shuffle Dyck,\footnote{Despite what its name might suggest, $k$-Shuffle Dyck does not randomly shuffle strings from $k$-Dyck. Every opening brace in $k$-Shuffle Dyck must still be closed by a matching closing brace later in the string; this constraint would not in general be satisfied by randomly shuffled $k$-Dyck strings. Instead, $k$-Shuffle Dyck can be defined by interleaving $k$ 1-Dyck strings with different braces \citep{suzgun-etal-2019-lstm}, as if by riffle shuffling. We use the terminology ``Shuffle Dyck" for consistency with prior work.} a minimal relaxation of $k$-Dyck that is strictly context-sensitive rather than context-free \cite{suzgun-etal-2019-lstm,strobl-etal-2024-formal}.
Crossing braces in $k$-Shuffle Dyck can be thought of as a formal model of the cross-serial dependencies underlying aspects of language argued to be context-sensitive \citep{papadimitriou-jurafsky-2023-injecting}.

\subsection{The Circuit Hierarchy}
\label{sec:circuit-background}

We focus in this work on transformer language models. There are languages at each level of the Chomsky hierarchy that a transformer cannot recognize \citep{merrill2023a,liu-etal-2024-shortcuts,strobl-etal-2024-formal}.
Thus, the Chomsky hierarchy alone does not precisely capture how difficult a language is for transformers to learn: for instance,  transformers can learn some context-free languages \cite{butoi2025training} and yet fail to learn other regular languages \cite{merrill2024the}.
To better understand the expressive power of transformers, recent work has analyzed formal languages within a different hierarchy: the circuit complexity hierarchy, which better captures the computations performed by transformers \cite{Hao2022FormalLR,yang2024masked}. Here, we will focus on two logics that emerge from the circuit complexity viewpoint: $\FOM$ \cite{merrill2023a} and $\CRASP$ \cite{yang2024counting}.

\paragraph{$\FOM$.} First-order logic with majority, or $\FOM$, is a provable \emph{upper bound} on the languages that transformers can express: that is, any transformer that recognizes a language can be converted into an $\FOM$ program that defines (or recognizes) the same language \citep{merrill2023a}.
$\FOM$ programs operate by computing counts over the number of indices in an input string that satisfy certain predicates.
For example, $Q_a(i)$ is a basic predicate that checks whether input token $i$ is an $a$. The following $\FOM$ program uses $Q_a(i)$ to define the language of strings with exactly 3 $a$'s:
\begin{equation} \label{eq:count3}
    \#i \leq n [Q_a(i)] \; = \; 3
\end{equation}
Beyond this example, $\FOM$ can implement a rich variety of programs by nesting quantifiers and building complex predicates out of logical ($\wedge, \vee, \neg$) and arithmetic operators ($+, =, <$).
In particular, $\FOM$ can define the $k$-Dyck language for any $k \geq 1$ (Proposition~\ref{prop:kdyck} in the Appendix).
For example, the following program defines $1$-Dyck:
\begin{align}
    \mathsf{depth}(i) \;\; &\equiv \;\; \# j \leq i [Q_((i)] - \# j \leq i [Q_)(i)] \nonumber \\
    [\mathsf{depth}(n) &= 0] \; \wedge \; \# i \leq n [ \mathsf{depth}(i) < 0] = 0 \label{eq:1dyck}
\end{align}
To define $2$-Dyck, this can be extended by modifying $\mathsf{depth}$ to track two bracket types and computing the following depth index:
\begin{align}
    \mathsf{dindex}(i) \;\; \equiv \;\; \#j \leq i [\mathsf{depth}(i) = \mathsf{depth}(j) ] \label{eq:dindex}
\end{align}
To finish the definition, we add a condition to enforce that any open and close brace paired by $\mathsf{depth}$ and $\mathsf{dindex}$ also match in their type (i.e., both are parentheses or both are square braces).
See Proposition \ref{prop:kdyck-fom} for further details.

\paragraph{$\CRASP$.} While any transformer can be compiled into $\FOM$, it is not necessarily the case that any $\FOM$ program can be implemented by a transformer.
$\CRASP$ is a restriction of $\FOM$ designed to be a \emph{lower bound} on what transformers can express: that is, if a language is definable in $\CRASP$, then there exists a transformer that recognizes it \citep{yang2024counting}.\footnote{$\CRASP$ is a well-defined variant of the Restricted Access Sequence Processing programming language \citep[RASP;][]{weiss21rasp,lindner2023tracr}.}
The most crucial restriction for our purposes is that each $\CRASP$ predicate can only refer to one index variable $i$, whereas in $\FOM$ predicates can refer to two (or more) indices $i, j$ introduced by different quantifiers (for more detail, see \citealp{yang2024counting}).
This means $\CRASP$ can define \eqref{eq:count3} or \eqref{eq:1dyck} above, but not $k$-Dyck for $k \geq 2$, as $\CRASP$ cannot express the function $\mathsf{dindex}$ in \eqref{eq:dindex}, which compares the depth of two different indices.

Recent work has also suggested a connection between $\CRASP$ and transformers' ability to generalize to strings longer than those observed in training~\citep{zhou2024what,huang2025a}: the definability of a language $L$ in $\CRASP$ predicts whether transformers can reliably length-generalize when trained on strings from $L$.
One interpretation of this finding is that mechanisms expressible in $\CRASP$ may be more robustly learnable by transformers.
We thus hypothesize that we will observe more reliable transfer from pre-pretraining transformers on formal languages that can be defined in $\CRASP$ compared to languages that cannot.

\section{Methods}
\label{sec:methods}


\subsection{Defining Pre-pretraining}
\label{sec:methods-ppt}

We train a language model using an optimizer $\mathcal{A}(\mathcal{D}, t, \theta_{\text{init}})$ which returns parameters $\theta_t$ after $t$ timesteps (gradient updates). We apply $\mathcal{A}$ sequentially:
\vspace{0.25ex}
\begin{compactenum}
    \item Pre-pretrain for $t_0$ steps on dataset $\Dppt$ to obtain model parameters $\theta_{t_0}$.
    \item Pretrain for $t_1$ steps on dataset $\Dpt$ to obtain $\theta_{t_1}$.
\end{compactenum}
\vspace{0.25ex}

Our objective is to minimize the expected loss on the pretraining dataset, i.e. to find
$\arg \min_{\theta_{t_1}} \;\; \mathbb{E} [\ell(\Dpt, \theta_{t_1})]$. We hold $\mathcal{A}$'s hyperparameters, $t_1$, and $\Dpt$ fixed, and we transfer model parameters directly from pre-pretraining to pretraining. In other words, to minimize $\ell(\Dpt, \theta_{t_1})$, we can only change the pre-pretraining dataset $\Dppt$ and duration $t_0$. 
We compare pre-pretraining on our proposed $\Dppt$ datasets (\S \ref{sec:methods-grammars}) against several baselines:

\vspace{0.25ex}
\begin{compactitem}
    \item No pre-pretraining $(t_0 = 0)$.
    \item Pre-pretraining on random binary strings.
    \item Pre-pretraining on random strings of $k$ integers.
    \item Pre-pretraining on unseen natural language data $\Dpt^{*}$ drawn from the same distribution as $\Dpt$.
\end{compactitem}
\vspace{0.25ex}
Aside from the no-pre-pretraining baseline, we pre-pretrained the baselines for $t_0=500$ steps, the optimal number of steps for $k$-Shuffle Dyck (see~\S\ref{sec:expressivity}).
We note that the natural language pre-pretraining baseline is not equivalent to training on the union of $\Dpt^{*}$ and $\Dpt$ for longer, since pre-pretraining on natural language uses learning rate warmup twice, once in pre-pretraining and once in pretraining. 

Lower validation loss compared to the no-pre-pretraining baseline would indicate that pre-pretraining on formal languages is beneficial. The random string baselines help establish whether this effect is specific to the particular formal languages we study. Finally, outperforming pre-pretraining on $\Dpt^{*}$ would suggest that formal languages provide a better initialization for pretraining than the pretraining data itself.

\paragraph{Evaluation.} In addition to measuring validation loss, we perform targeted evaluations for grammaticality and verbatim retrieval. For grammaticality judgments, we compare the likelihood assigned by the model to minimal pairs of sentences that differ only in their grammaticality (e.g., \textit{Only Bill would ever complain} is grammatical, but \textit{Even Bill would ever complain} is not). Accuracy is measured as the proportion of pairs where the grammatical sentence is assigned higher likelihood than the ungrammatical one \cite{marvin-linzen-2018-targeted}. We use the BLiMP grammaticality judgment dataset \cite{warstadt-etal-2020-blimp-benchmark}. 
Verbatim retrieval tests language modeling on text passages with repeated lists \cite{armeni-etal-2022-characterizing,armeni-etal-2024-transformer}; the model is expected to assign a very high likelihood to the words in the second repetition of the list, such that lower loss indicates better performance. Both evaluations assess models' ability to learn and apply consistent patterns---a capability that could benefit from pre-pretraining on formal languages might strengthen. 
For examples of these evaluations, see Tables~\ref{tab:blimp} and~\ref{tab:verbatim} in the Appendix.

\paragraph{Efficiency.} In the regime with plentiful pretraining data, an ideal pre-pretraining language should minimize the number of pre-pretraining steps $t_0$ required: if a formal language requires very large $t_0$ for effective transfer, then simply pretraining on natural language, without any pre-pretraining, would be more practical in terms of total compute (though even in this case pre-pretraining may still be beneficial when the amount of data available for pretraining is small, for example in low-resource languages). We quantify efficiency using the marginal rate of substitution (MRS) between formal and natural language at 10,000 steps of natural language pretraining. In other words, we ask: if we train on 500 steps of the formal language, how many more steps does it take for the natural-language-only baseline to catch up?  

For example, let $x$ be the number of pre-pretraining steps and $y$ be the number of pretraining steps, and suppose the following two pairs $(x, y)$ of training steps achieve the same final loss: $(0,~10,000)$ and $(500,~6,000)$. Then the marginal rate of substitution is 
$$\frac{|y_1-y_2|}{|x_1-x_2|} = \frac{|10,000 - 6,000|}{|0 - 500|} = 8.$$
The gain in token efficiency would be 
$$1 - \frac{6,000 + 500}{1,0000} = 35\%$$
For a visualization, see Figure \ref{fig:mrs}.

In our setting, a good pre-pretraining language would (1) minimize the amount of pre-pretraining steps $t_0$ (\textbf{efficiency}), and (2) increase the evaluation \textbf{performance} of the language model. 

\subsection{Between Circuits and Chomsky}
\label{sec:methods-grammars}

We hypothesize that a good pre-pretraining language should both mimic particular aspects of the complexity of natural language and be robustly learnable by transformers in a way that generalizes to longer strings than observed in training.
Because, as we discussed in \S\ref{sec:background}, natural language is hierarchically structured and $\CRASP$ is a formal model of what transformers can learn robustly, this motivates the following hypothesis: 

\begin{tcolorbox}[colback=gray!10,colframe=gray!70]
\textbf{Expressivity hypothesis:} A formal language that confers a helpful inductive bias should be hierarchically structured (either context-free or context-sensitive) and definable in $\CRASP$.
\end{tcolorbox}

To test this hypothesis, we pre-pretrain transformer language models on the following formal languages:
\vspace{0.25ex}
\begin{compactenum}
    \item 1-Dyck: the nested parentheses language. This language is context-free and in $\CRASP$.
    \item $k$-Dyck: contains $k$ different types of parentheses. The language is context-free and in $\FOM$ but not in $\CRASP$.
    \item $k$-Shuffle Dyck: $k$-Dyck with cross-serial dependencies. This language is context-sensitive and in $\CRASP$.\footnote{Figure \ref{code:shuff-dyck} shows a minimal code snippet for $\CRASP$.}
    \item $ww$: The copy language. This language is context-sensitive and in $\FOM$ but not in $\CRASP$.
\end{compactenum}
\vspace{0.25ex}

\begin{table}[htb]
\centering
\begin{tabularx}{0.75\columnwidth}{Xc}
\toprule
\textbf{Language}  & \textbf{Example} \\
\midrule
1-Dyck & \texttt{(\,(\,(\,)\,)\,)} \\
$k$-Dyck & \texttt{(\,\textcolor{blue}{[}\,\textcolor{red}{\{}\,\textcolor{red}{\}}\,\textcolor{blue}{]}\,)} \\
$k$-Shuffle Dyck  & \texttt{(\,\textcolor{blue}{[}\,\textcolor{red}{\{}\,\textcolor{blue}{]}\,)\,\textcolor{red}{\}}} \\ 
$ww$ & \texttt{1\,\textcolor{blue}{2}\,\textcolor{red}{3}\,1\,\textcolor{blue}{2}\,\textcolor{red}{3}} \\
\bottomrule
\end{tabularx}
\caption{Examples of our pre-pretraining languages. } 
\label{tab:stats} 
\end{table}

The three variants of Dyck languages model hierarchical structure, while $ww$ has a fixed dependency structure that maps the first half of the string onto the second half (Table \ref{tab:stats}). Proofs of where these languages lie on the Chomsky and circuit hierarchies can be found in Appendix \ref{app:proofs}.\footnote{The languages $\textsf{NEST}$ and $\textsf{CROSS}$ from \citet{papadimitriou-jurafsky-2023-injecting} are instances of $k$-Dyck and $k$-Shuffle Dyck, respectively. Their results align with our hypothesis.}

\begin{figure*}
    \centering
    \includegraphics[width=\linewidth]{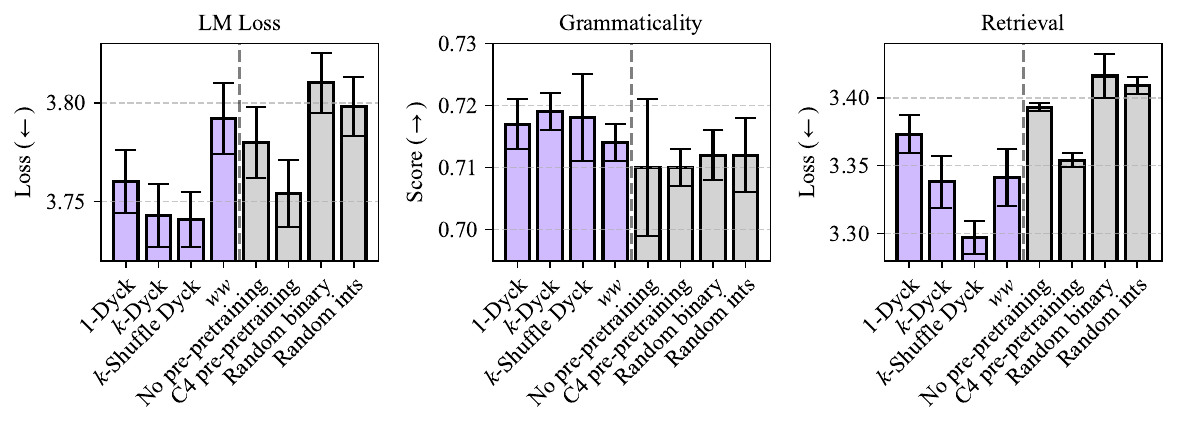}
    \caption{Evaluating models on overall language modeling loss, grammaticality and retrieval, at the optimal amount of pre-pretraining $t_0^*$ for each formal language (the value of $t_0^*$ for each language is determined based on Figure~\ref{fig:scaling}).}
    \label{fig:ppt-results}
    \vspace{-1em} 
\end{figure*}

We deliberately chose languages that are similar to each other. $k$-Dyck and $k$-Shuffle Dyck can be seen as different extensions of 1-Dyck: $k$-Dyck swaps out paired parentheses in valid 1-Dyck strings with new parentheses pairs, while $k$-Shuffle Dyck effectively interleaves several 1-Dyck sequences \cite{suzgun-etal-2019-lstm}. Finally, $ww$ contrasts with $k$-Shuffle Dyck as a maximally context-sensitive language, since \textit{all} the dependencies in $ww$ are cross-serial (i.e. none are nested within one another). 

We construct 1-Dyck, $k$-Dyck, and $k$-Shuffle Dyck corpora with matching depth distributions by randomly opening or closing parentheses with probability $p = 0.5$, which yields a harmonic distribution over depths. We truncate the length of the sequences at 2048. 
We also match the vocabulary size: $k$-Dyck, $k$-Shuffle Dyck, and $ww$ corpora each have 128 unique vocabulary items, or 64 unique parentheses pairs ($k=64$) for the Dyck languages (we explore the effect of this hyperparameter in \S\ref{sec:analysis}). All models are pre-pretrained on the same number of tokens with sequence packing. 

\section{Testing the Expressivity Hypothesis}
\label{sec:expressivity}

\begin{figure}[htbp] 
    \centering 
    \includegraphics[width=\columnwidth]{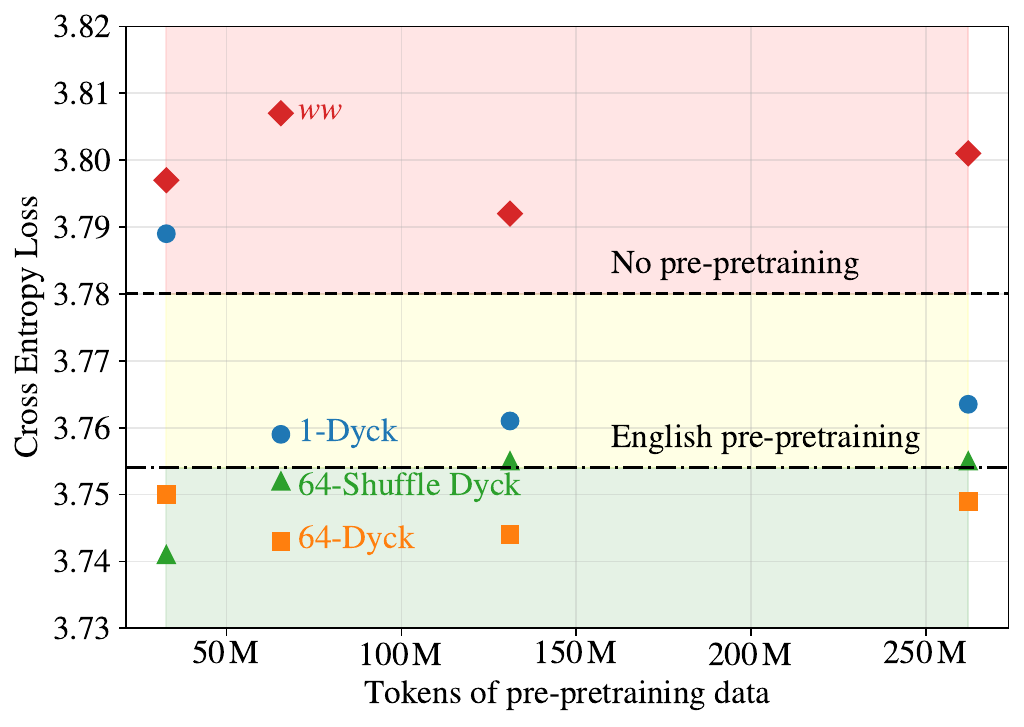} 
    \caption{C4 validation loss as a function of pre-pretraining tokens. For the formal languages that improve validation loss over no pre-pretraining, there is an optimal training duration after which additional pre-pretraining is harmful.} 
    \label{fig:scaling}
    \vspace{-1em}
\end{figure}

For natural language ($\Dpt$), we trained Pythia 160M models \cite{pythia} for 10,000 steps, or roughly 665 million tokens. We use C4 as the natural language dataset  \cite{Raffel2019ExploringTL}. For training hyperparameters, see Appendix~\ref{app:hyperparams}.

\paragraph{Efficiency.} We find that the optimal amount of pre-pretraining $t_0^*$ differs between formal languages. To estimate $t_0^*$, we sweep four pre-pretraining durations $t_0$. Figure \ref{fig:scaling} shows validation loss on natural language after pre-pretraining for 30 to 260 million tokens of formal language (500 to 4000 gradient updates). 

While both $k$-Shuffle Dyck and $k$-Dyck outperform natural language pre-pretraining, $k$-Shuffle Dyck is more efficient with $t_0^* = 500$ compared to $t_0^* = 1000$ for $k$-Dyck. Pre-pretraining  on $ww$  is unhelpful at all durations. For each of the languages where pre-pretraining is effective, there is an optimal duration after which additional formal language pre-pretraining leads to less effective transfer overall. $k$-Shuffle Dyck has the highest MRS, indicating that it replaces tokens on natural language most efficiently (see Table~\ref{tab:ppt-results} in the Appendix). Furthermore, the MRS for 1-Dyck, $k$-Dyck, and $k$-Shuffle Dyck are all greater than 1, indicating that exchanging natural language for these formal languages is \textbf{compute-optimal} in our setting.

\paragraph{Performance.} $k$-Shuffle Dyck is the best-performing formal language on the natural language validation set from C4, followed by $k$-Dyck (Figure \ref{fig:ppt-results}). Interestingly, pre-pretraining on all four formal languages improves accuracy in grammaticality, but pre-pretraining on natural language does not (for grammaticality accuracies by category, see Figure~\ref{fig:blimp-accuracies} in the Appendix). This indicates that formal language pre-pretraining also changes models' generalization properties, in addition to driving the language modeling loss lower. We hypothesize this is because pre-pretraining induces representations useful for modeling hierarchical structure; we revisit this point in \S \ref{sec:pruning}.

Pre-pretraining on either random binary strings or $k$-integer strings has a negative effect: it results in higher validation loss than no pre-pretraining. This rules out the hypothesis that any pre-pretraining is helpful, regardless of the data being pre-pretrained on. 

\paragraph{Summary.} Hierarchical dependencies, which both $k$-Dyck and $k$-Shuffle Dyck have, appear to be crucial for positive transfer from formal to natural language. Although of these two languages only $k$-Shuffle Dyck is expressible by $\CRASP$, it only significantly outperforms $k$-Dyck on verbatim retrieval. That being said, $k$-Shuffle Dyck is more efficient than $k$-Dyck, achieving its optimal amount of pre-pretraining 500 steps earlier. Taken together, we find modest evidence supporting the importance of the expressibility of the language in $\CRASP$.

\section{Mechanistic Analysis: Subnetworks}
\label{sec:pruning}

What is the mechanism by which pre-pretraining facilitates the learning of natural language? We hypothesize that the model implements next-token prediction on $\Dppt$ using a sparse subnetwork, or some subset of the total parameters $\mathcal{M}(\theta_{t_0}) \subset \theta_{t_0}$ ($\mathcal{M}$ for short). Once we transfer $\theta_{t_0}$ to learn $\Dpt$, this subnetwork $\mathcal{M}$ continues to drive the performance of language modeling on $\Dpt$.

\begin{tcolorbox}[colback=gray!10,colframe=gray!70]
\textbf{Subnetworks hypothesis:} Attention heads established during formal language pre-pretraining are later used to represent the hierarchical structure of natural language.
\end{tcolorbox}

We test this hypothesis by ablating attention heads of the pre-pretraining subnetwork and comparing the drop in performance against random attention head ablations. Concretely, we pre-pretrain on $\Dppt$ and prune the model to find the sparse subnetwork $\mathcal{M}(\theta_{t_0})$. We use the heuristic core pruning algorithm from \citet{bhaskar-etal-2024-heuristic}, which  iteratively removes attention heads from the transformer using structured pruning \citep{xia-etal-2022-structured} while minimizing the tradeoff between sparsity and language modeling loss on $\Dppt$. After transfer and training on $\Dpt$, we evaluate the masked model $\mathcal{M}(\theta_{t_1})$ against a model $\mathcal{M}_{\text{null}}(\theta_{t_1})$ where a subnetwork with the same number of randomly chosen attention heads was ablated.

Positive transfer from $\Dppt$ to $\Dpt$ could occur for reasons unrelated to subnetworks (e.g., computations are distributed across all heads or in other components of the model). In this case, the masked model $\mathcal{M}$ should perform no better than random masks $\mathcal{M}_{\text{null}}$ when applied to $\theta_{t_1}$. However, if pre-pretraining does induce useful inductive biases, we would expect $\mathcal{M}$ to be an important subnetwork even after training on $\Dpt$. So in the alternative hypothesis, $\mathcal{M}$ should significantly outperform $\mathcal{M}_{\text{null}}$.

\paragraph{Results.} After pre-pretraining on $k$-Shuffle Dyck, we ablate $50\%$ of the attention heads. Following previous work \cite{bhaskar-etal-2024-heuristic,zhang2024towards}, we replace an ablated head with its mean activation. We compare $\mathcal{M}$ to the random subnetwork $\mathcal{M}_{\text{null}}$ and to $\mathcal{M}$'s complement subnetwork $\mathcal{M}^c$.


\begin{figure}[htbp] 
    \centering 
    \includegraphics[width=\columnwidth]{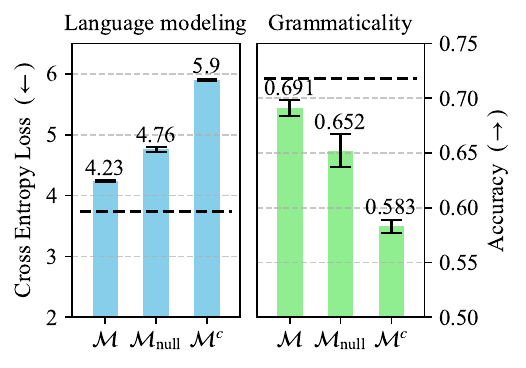} 
    \caption{Language modeling and grammaticality performance for the learned subnetwork $\mathcal{M}$, its complement $\mathcal{M}^c$, and randomly sampled masks $\mathcal{M}_{\text{null}}$. $\mathcal{M}$ outperforms $\mathcal{M}^c$ and $\mathcal{M}_{\text{null}}$, indicating that the subnetwork learned during pre-pretraining continues to play a critical role after training on natural language. Dashed lines indicate performance of the base model without pruning.} 
    \label{fig:pruning} 
\end{figure}


We find that $\mathcal{M}$  outperforms $\mathcal{M}_{\text{null}}$ and $\mathcal{M}^c$ in both language modeling and grammaticality (Figure~\ref{fig:pruning}). We reject the null hypothesis that the subnetwork $\mathcal{M}$ established during pre-pretraining has the same performance as a randomly sampled subnetwork ($p \ll 0.001)$. Further supporting the role of the heads in $\mathcal{M}$, we find that $\mathcal{M}^c$, which excludes all of the heads identified by the pruning procedure, performs much more poorly than $\mathcal{M}_{\text{null}}$, that only includes a random subset of them. That being said, while the performance $\mathcal{M}$ is close to the performance of the full network, it does not quite match it, indicating that attention heads outside of $\mathcal{M}$ also play a role in processing natural language. 

A breakdown of grammaticality judgment accuracy by grammatical phenomenon shows that only a handful of phenomena are unaffected by masking, some substantially (e.g., the accuracy on subject-verb agreement drops 12 percentage points; see \ref{fig:blimp-mask-delta} in the Appendix). These phenomena appear to be ones that are syntactically simple but diagnose sensitivity to word structure (morphology), e.g., the distinction between \textit{broke} and \textit{broken}; we hypothesize that this aspect of linguistic knowledge is less likely to be mechanistically related to the processing of dependencies in a formal language.

\section{Additional Analyses}
\label{sec:analysis}

\begin{figure*}[t]
    \centering
    \includegraphics[width=\linewidth]{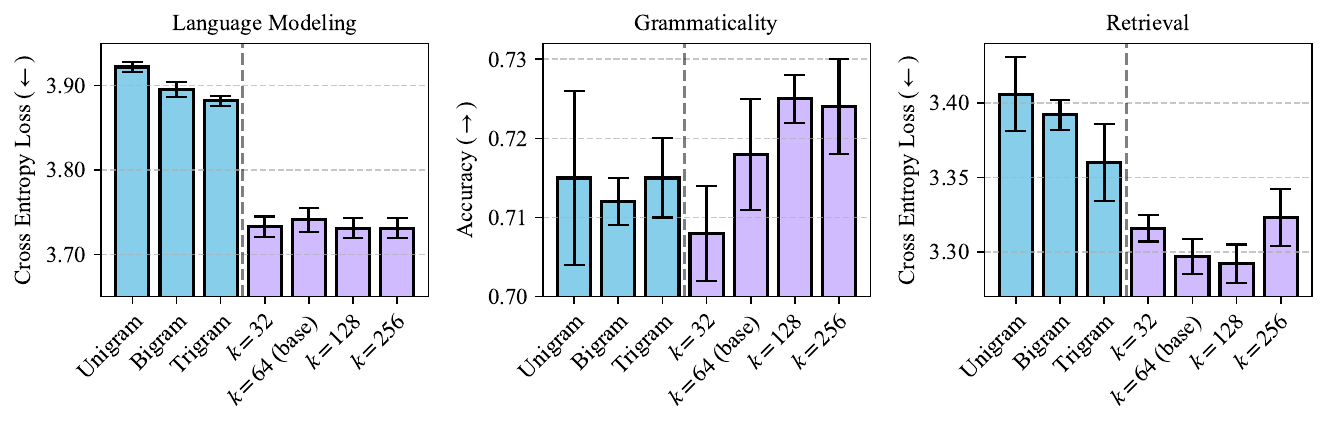}
    \caption{\textbf{Blue bars}: Testing if the benefit of pre-pretraining on $k$-Shuffle Dyck can be reduced to learning local statistics. We find that pre-pretraining on $n$-gram metamers \cite{Kumar2022DisentanglingAF} of $k$-Shuffle Dyck performs worse than pre-pretraining on $k$-Shuffle Dyck itself. \textbf{Purple bar}: Vocabulary size manipulation. The best vocabulary size for $k$-Shuffle Dyck is $k=128$.}
    \label{fig:ppt-ablations}
    \includegraphics[width=\linewidth]{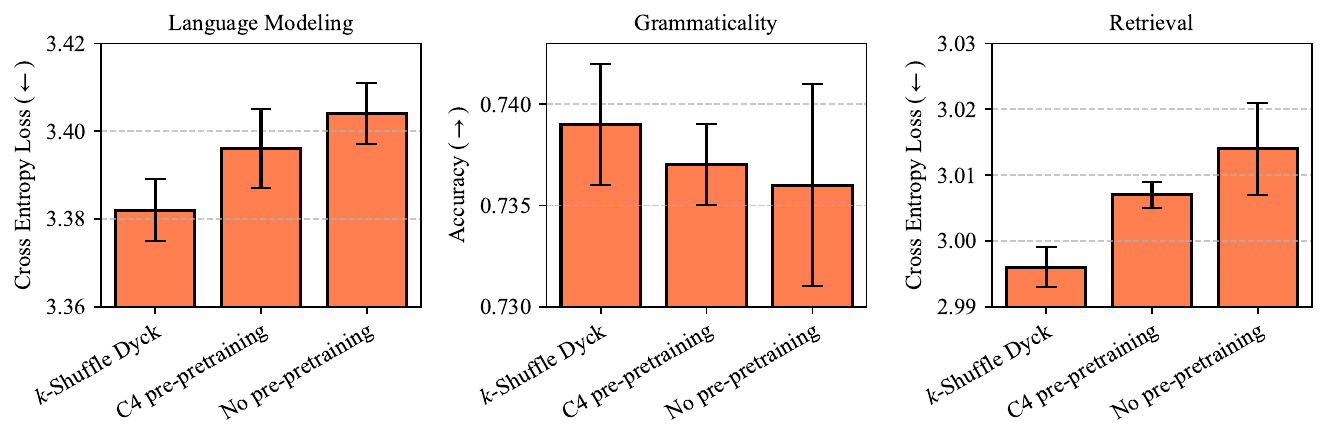}
    \caption{Pre-pretraining Pythia-1B on 1.6B tokens of $k$-Shuffle Dyck improves over the baselines, especially on language modeling and the retrieval evaluation.}
    \label{fig:ppt-1b}
\end{figure*}

This section reports three additional experiments. Due to the computational cost of pretraining, we focus on $k$-Shuffle Dyck, which performed well in our main experiments. First, we test and rule out the hypothesis that pre-pretraining on $k$-Shuffle Dyck is only effective because of its local statistical properties, and conclude that its effectiveness stems from its structural properties. Next, we study the impact of the vocabulary size hyperparameter $k$ on the effectiveness of transfer from $k$-Shuffle Dyck. Finally, we perform a larger scale training run with Pythia 1B and find that pre-pretraining on $k$-Shuffle Dyck helps in this setting as well. In all of these experiments, we used the optimal number of pre-pretraining gradient updates $t^*_0 = 500$ found in our main experiment (equivalent to 30 million tokens).  

\paragraph{Transfer is not only due to local statistical properties.} Could the successful transfer from $k$-Shuffle Dyck to natural language be due to the local statistical properties of  $k$-Shuffle Dyck, rather than its dependency structure? Learning local statistical regularities is consistent with the finding that neural networks can exhibit \textbf{distributional simplicity bias} (DSB)---they learn simpler statistical patterns, such as the mean and covariance of their representations, before progressing to higher-order relationships \cite{Saxe2013ExactST,lecun-eigen}; in particular, transformer language models learn $n$-gram statistics in order of increasing complexity \cite{belrose2024neural}.

To test this hypothesis, we create variants of $k$-Shuffle Dyck that share its local statistics but not its global, rule-based structure. Concretely, we train unigram, bigram, and trigram models on the pre-pretraining corpus we generated from $k$-Shuffle Dyck for our main experiment, and, using these $n$-gram models, we generate ``metamer datasets'' \cite{Kumar2022DisentanglingAF} of equivalent size. 

We find that pre-pretraining on metamer datasets is strictly less effective than pre-pretraining on $k$-Shuffle Dyck, ruling out the hypothesis that the benefit of pre-pretraining on $k$-Shuffle Dyck is due to local statistics. That being said, pre-pretraining on the unigram metamer performs the worst, followed by bigram and trigram, suggesting that local statistics may explain part of the success of pre-pretraining on structured languages. 

\paragraph{Larger vocabulary size may be beneficial.} To check whether better hyperparameters exist for $k$-Shuffle Dyck, we sweep its vocabulary size, trying $k=32,128$ and $256$ in addition to our previous experiments with $k=64$. We find that $k=128$ has the best performance across all metrics instead of $k=64$, suggesting there likely do exist better hyperparameters. 

Finding good ways to optimize these hyperparameters is an interesting area for future work. The hyperparameter tuning process for pre-pretraining is expensive, as evaluating the hyperparameters requires pretraining a language model. Nevertheless, various approximations such as early truncation exist in the hyperparameter tuning literature \cite{hyperband,Swersky2013MultiTaskBO}, and one can also use scaling laws to experiment at a smaller scale first \cite{Yang2022TensorPV}. 

\paragraph{Pre-pretraining is effective at the 1B scale too.} Finally, we examine whether our results generalize to larger settings by training Pythia-1B on 1.63B tokens from C4 (25,000 steps). In this setting, pre-pretraining on $k$-Shuffle Dyck continues to outperform the baselines on all evaluation metrics (Figure \ref{fig:ppt-1b}) and achieves the final loss of the no-pre-pretraining baseline after training for only 1.10B total tokens. This equates to a token efficiency gain of 33\% $\left(1 - \frac{1.10\text{B}}{1.63\text{B}}\right)$, or an MRS of 17.3 $\left(\frac{1.63\text{B}-1.10\text{B}}{0.03\text{B}}\right)$. \mbox{$k$-Shuffle} Dyck's MRS is $\gg$1 for both 160M and 1B training runs, suggesting that pre-pretraining could increase the efficiency of large-scale pretraining as well.

\section{Related Work}
 
The goal of pre-pretraining is similar to that of optimization-based meta-learning, which aims to create a weight initialization that allows the model to rapidly learn new tasks \cite{maml,fomaml} and languages \cite{mccoy2020universal,mccoy2023modelingrapidlanguagelearning}. The beneficial effect of pre-pretraining on formal language is consistent with the evidence of transfer from source code to natural language, especially for structured tasks \cite{petty2025how,aryabumi2025to,kim2024codepretrainingimprovesentity}. In the vision domain, \citet{synthetic-image-training} show that a thousand synthetically generated images can replace a million images from ImageNet-1k, in a similar spirit to our work. 

Transfer in NLP has also been studied across different languages and domains \citep{ruder-etal-2019-transfer,pruksachatkun-etal-2020-intermediate,deshpande-etal-2022-bert}. Most relevant to our work, \citet{mueller-linzen-2023-plant} show that pretraining on child-directed speech gives a better inductive bias for learning hierarchical syntactic features than standard pretraining corpora. Furthermore, introducing a small amount of synthetic, disambiguating data into pretraining can induce a language model to change its generalization strategy \cite{warstadt-etal-2020-learning}. Related but distinct from our approach are studies that use synthetic data sampled from a formal language to evaluate models' generalization behavior in a controlled way \cite{mccoy-etal-2019-right,kim-linzen-2020-cogs,li-etal-2023-slog}.

Pre-pretraining is a form of curriculum learning \citep{bengio2009curriculum}, the approach of actively adjusting properties of the data during training. Recent work has developed algorithms that automate the discovery of language modeling curricula \cite{chen2025aioli,jiang2025adaptive}, and many language model training recipes introduce different data mixtures at different stages  \cite{smollm2,2olmo2furious,instructgpt}. The positive results of our experiments contrast with the largely negative results of the attempts to improve language models' data efficiency via linguistics-inspired curriculum learning, as part of the BabyLM challenge \cite{warstadt-etal-2023-findings,hu-etal-2024-findings}, pointing to the crucial effect of the particular data presented as part of the curriculum.

\section{Discussion}

We have found that pre-pretraining on formal languages can improve the language modeling loss and linguistic generalization abilities of transformer language models. In fact, pre-pretraining on some formal languages was more effective than increasing the amount of natural language training data by the same amount: the inductive bias conferred by the formal language was more helpful than additional in-distribution data. While most of the experiments in this paper were with 160M-parameter models, we also found benefits from pre-pretraining in the 1B-parameter setting.

We hypothesized that the languages that are most effective in this paradigm are those that, first, feature hierarchical dependencies, and second, are representable in $\CRASP$, and therefore readily learnable by transformers. The first part of this hypothesis is supposed by the superior performance of $k$-Dyck and $k$-Shuffle Dyck, the languages with hierarchical dependencies, relative to other languages. Our evidence for the importance of expressibility in $\CRASP$ is less clear: $k$-Shuffle Dyck clearly outperformed $k$-Dyck, which is not expressible in $\CRASP$, on one of the evaluations, and also required fewer steps of pre-pretraining than the other languages (\S \ref{sec:expressivity}). While natural language is arguably context-sensitive, in the Chomsky hierarchy sense, not every context-sensitive language was beneficial in pre-pretraining: in fact, pre-pretraining was harmful when we used the copy language $ww$, which, while context-sensitive since it contains cross-serial dependencies, does not illustrate the notion of hierarchy and is not definable in $\CRASP$. That being said, since $k$-Dyck performed almost as well as $k$-Shuffle Dyck, there may exist a sharper characterization of the class of formal languages that confer a helpful inductive bias than defined by our expressivity hypothesis; experiments with a larger sample of formal languages would be needed to progress towards such a characterization.


The marginal rate of substitution between formal and natural language is greater than one (Table~\ref{tab:ppt-results}), meaning that one token of formal language in pre-pretraining substitutes for more than one token of natural language in pretraining. This is a surprising result from the perspective of statistical learning theory \cite{Vapnik2000TheNO}, in that we observe faster convergence by swapping in data from a \textit{different distribution}. We hypothesize that initialization can have a critical effect on learning dynamics \cite{mccoy-etal-2020-berts,sellam2022the}, and pre-pretraining on formal language produces a initialization that is favorable to natural language learning.

\FloatBarrier

\section{Limitations}
\label{sec:limitations}

In this work, we considered blocked training, where we first train on formal language and then on natural language. While blocked training has the advantage that the initialization produced by formal language pre-pretraining can then be distributed and easily plugged into existing pretraining pipelines, it is possible that the optimal training regimen involves mixing formal and natural language during training \cite{tomek-mixing}. We also evaluated the effectiveness of pre-pretraining in a setting where natural language pretraining data is plentiful, as it is for English, such that it is possible to train the model for a considerable number of tokens without processing the same data multiple times over several epochs. We hypothesize that pre-pretraining will be even more effective for low-resource natural languages, and may yield different scaling properties with respect to pre-pretraining data \cite{Muennighoff2023ScalingDL}. 
Relatedly, a natural extension to this project is establishing scaling laws for pre-pretraining; the benefit of pre-pretraining beyond 1 billion parameters and 1.6 billion tokens is currently unknown. Finally, our work only considers transformers. Circuit complexity has also quantified the expressive power of neural networks like RNNs and state-space models \cite{merrill-etal-2020-formal,merrill2024the}, and it would be interesting to extend our results to these architectures.



\section*{Acknowledgments}

Many thanks to Andy Yang, Angelica Chen, Dan Friedman, Isabel Papadimitriou, Lindia Tjuatja, Mayee Chen, Qingyang Zhu, and the NYU Computation and Psycholinguistics Lab for feedback and discussion.
This work was supported in part through the NYU IT High Performance Computing resources, services, and staff expertise.
This project is supported by the National Science Foundation (NSF) under grant NRT-HDR: FUTURE as well as Grant No. IIS-2239862.
MYH is supported by the NSF Graduate Research Fellowship.
WM is supported by the NSF Graduate Research Fellowship as well as the Two Sigma PhD Fellowship.

\bibliography{anthology,custom}

\vfill
\pagebreak
\appendix

\section{Proofs}
\label{app:proofs}

We make use of the following to establish that all languages we consider are context-sensitive.

\begin{lemma} \label{lem:fom-csl}
    Any language definable in $\FOM$ can be recognized by a context-sensitive grammar.
\end{lemma}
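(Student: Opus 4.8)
The plan is to read the lemma complexity-theoretically: I will show that the membership problem for any fixed \FOM{}-definable language lies within the space bound that characterizes the context-sensitive languages. The key external fact I would invoke is Kuroda's theorem, that the context-sensitive languages are exactly those recognized by (nondeterministic) linear-bounded automata, equivalently $\mathrm{CSL} = \mathsf{NSPACE}(n)$. Hence it suffices to exhibit, for a fixed \FOM{} sentence $\varphi$, a Turing machine that decides ``$w \in L(\varphi)$?'' using space linear (in fact logarithmic) in $|w|$.

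First I would fix an input $w$ of length $n$ and describe a recursive evaluation procedure that walks the parse tree of $\varphi$. Position variables range over $\{1,\dots,n\}$ and are stored as $O(\log n)$-bit indices into the read-only input; the atomic predicates $Q_a(i)$ are evaluated by a single read of position $i$; the relations $<, =$ and the operator $+$ on indices and counting terms are computed in place. A counting term $\#i \le n[\psi(i)]$ is evaluated by sweeping $i$ from $1$ to $n$, recursively evaluating $\psi(i)$, and maintaining a running total; since $\psi$ holds for at most $n$ positions, this total fits in $O(\log n)$ bits.

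The main thing to verify---and the one mild obstacle---is that no intermediate value overruns the space budget. Because $\varphi$ is fixed, its parse tree has constant depth, so at any moment only a constant number of index variables and partial counts are ``live,'' each occupying $O(\log n)$ bits, for total work space $O(\log n)$. I also need that the numeric values produced by the allowed arithmetic stay polynomially bounded in $n$: every counting term is bounded by $n$, and since a fixed formula combines only constantly many of them using $+$ and comparisons, all intermediate quantities are $O(n)$ and thus representable in $O(\log n)$ bits. This gives $L(\varphi) \in \mathsf{DSPACE}(\log n)$.

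Finally I would chain the inclusions $\mathsf{DSPACE}(\log n) \subseteq \mathsf{DSPACE}(n) \subseteq \mathsf{NSPACE}(n) = \mathrm{CSL}$ to conclude that $L(\varphi)$ is generated by a context-sensitive grammar. As an alternative to the hand-built evaluator, one could instead invoke the classical descriptive-complexity identity that \FOM{} captures (uniform) $\mathsf{TC}^0$, together with $\mathsf{TC}^0 \subseteq \mathsf{L} \subseteq \mathrm{CSL}$; I prefer the direct space argument, since it is self-contained and insensitive to the precise uniformity conventions used to define \FOM{}.
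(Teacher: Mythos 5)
Your proof is correct, and it reaches the same destination as the paper's by a genuinely different route at the key step. Both arguments end identically: land inside logspace, then chain $\mathsf{SPACE}(\log n) \subseteq \mathsf{NSPACE}(n)$ and invoke Kuroda's theorem that $\mathsf{NSPACE}(n) = \mathsf{CSL}$. The difference is how the inclusion of \FOM{} in logspace is established. The paper cites the descriptive-complexity result of Barrington, Immerman, and Straubing that \FOM{} captures $\mathsf{LOGTIME}$-uniform $\mathsf{TC}^0$, and then uses $\mathsf{TC}^0 \subseteq \mathsf{L}$; you instead build a direct $O(\log n)$-space evaluator for a fixed \FOM{} sentence, sweeping each counting quantifier with an $O(\log n)$-bit accumulator and noting that a fixed formula keeps only constantly many $O(\log n)$-bit indices and counts live at once, with all arithmetic values bounded by $O(n)$. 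Your version buys self-containedness and, as you note, insensitivity to the uniformity conventions in the definition of \FOM{}, which is a real virtue since those conventions vary across the literature; the cost is that you must verify the space accounting by hand (which you do: constant quantifier depth, polynomially bounded intermediate values). The paper's version is shorter and deliberately routes through $\mathsf{TC}^0$, which fits its broader framing of transformers as $\mathsf{TC}^0$-bounded machines, but it leans on a nontrivial external theorem. Both are valid proofs of the lemma.
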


\begin{proof}
    \citet{barrington1990uniformity} show that the class of languages definable in $\FOM$ is $\mathsf{LOGTIME}$-uniform $\mathsf{TC}^0$, which is a subset of $\mathsf L = \mathsf{SPACE}(\log n)$.
    On the other hand, the context-sensitive languages are those languages recognizable by linear-bounded automata \citep{kuroda1964classes}. That is, $\mathsf{CSL} = \mathsf{NSPACE}(n)$.
    Putting these characterizations together, we see that
    \begin{equation*}
        \mathsf{TC}^0 \subseteq \mathsf{SPACE}(\log n) \subseteq \mathsf{NSPACE}(n) = \mathsf{CSL} .
    \end{equation*}
    Therefore we can conclude that any language definable in $\FOM$ is context-sensitive.
\end{proof}

We will make use of the classical pumping lemma to establish that some specific languages considered are \emph{strictly} context-sensitive, i.e., not context-free.

\begin{lemma}[Pumping Lemma, \citealp{bar-hillel-1961-formal}]
    Let $L$ be a context-free language. Then there exists a pumping constant $p > 0$ such that any string $s \in L$ of length $\abs{s} \geq p$ can be written as $s = uvwxy$ where
    \begin{compactenum}
        \item $\abs{vwx} \leq p$;
        \item $\abs{vx} \geq 1$; and
        \item $uv^nwx^ny \in L$ for all $n \geq 0$.
    \end{compactenum}
\end{lemma}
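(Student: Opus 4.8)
The plan is to use the classical parse-tree argument combined with the pigeonhole principle, after first putting the grammar into a convenient normal form. I would fix a context-free grammar $G$ generating $L$ and convert it to Chomsky Normal Form, so that every production is either $A \to BC$ with $B, C$ nonterminals or $A \to a$ with $a$ a terminal (the special handling of $\epsilon$ is irrelevant here, since it only concerns strings shorter than $p$). Writing $r$ for the number of nonterminals of $G$, I would set the pumping constant to $p = 2^{r}$.

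The key structural observation is a bound linking the length of a string to the height of its parse tree: in CNF the parse tree is binary, so a derivation tree whose longest root-to-leaf path passes through at most $m$ nonterminal nodes has a yield of length at most $2^{m-1}$. I would prove this by induction on $m$, using that an $A \to BC$ root splits into two subtrees each of path-depth $\leq m-1$. Contrapositively, any $s \in L$ with $\abs{s} \geq p = 2^{r}$ has a parse tree containing a root-to-leaf path through at least $r+1$ nonterminal nodes. Since $G$ has only $r$ distinct nonterminals, among the bottommost $r+1$ nonterminals on that path two must carry the same label $A$ (pigeonhole).

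Next I would use the two repeated occurrences of $A$ to define the decomposition. Let the higher occurrence root a subtree with yield $vwx$ and the lower occurrence root a subtree with yield $w$; the material to the left and right of the higher subtree supplies $u$ and $y$, so $s = uvwxy$. Because $A \Rightarrow^* vAx$ (the upper $A$ deriving down to the lower one) and $A \Rightarrow^* w$, iterating the first derivation $n$ times before applying the second gives $A \Rightarrow^* v^{n} w x^{n}$, hence $S \Rightarrow^* u v^{n} w x^{n} y \in L$ for every $n \geq 0$, which is condition~3. For condition~1, choosing the repeated pair among the bottom $r+1$ nonterminals forces the higher subtree's longest path to have $\leq r+1$ nonterminals, so the yield bound gives $\abs{vwx} \leq 2^{r} = p$. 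For condition~2, the higher $A$ expands by a binary rule $A \to BC$; the lower $A$ lies strictly inside one of the two child subtrees, while the sibling subtree (nonempty in CNF) contributes entirely to $v$ or to $x$, so $\abs{vx} \geq 1$.

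The main obstacle is not any single step but arranging the decomposition so that all three conditions hold simultaneously: the height bound must be tight enough to force a repeated label, the repeated pair must be chosen low enough in the tree to cap $\abs{vwx}$, and the normal form (absence of unit and $\epsilon$ productions) must be invoked precisely to guarantee $\abs{vx} \geq 1$. The delicate bookkeeping is making the constants line up — $p = 2^{r}$ against the $2^{m-1}$ yield bound and the ``bottom $r+1$ nonterminals'' selection — which is what ties the whole argument together.
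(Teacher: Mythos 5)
Your proof is correct, and it is the standard parse-tree argument (Chomsky Normal Form, the $2^{m-1}$ yield bound for trees whose longest path has $m$ nonterminal nodes, pigeonhole on the bottommost $r+1$ nonterminals of a longest path, and tree surgery for pumping). The constants do line up as you claim: with $p = 2^r$, a string of length at least $p$ forces a path with at least $r+1$ nonterminal nodes; picking the repeated label among the bottommost $r+1$ of those caps $\abs{vwx}$ at $2^r = p$ (using that the suffix of a longest path in the whole tree is a longest path in the subtree it enters); and the absence of $\epsilon$- and unit productions in CNF gives the sibling subtree a nonempty yield, hence $\abs{vx} \geq 1$. One point worth comparing against the paper: the paper does not prove this lemma at all. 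It states it with a citation to \citet{bar-hillel-1961-formal} and uses it purely as a black box in Appendix~\ref{app:proofs}, to show via pumping that $k$-Shuffle Dyck (intersected with $\texttt{(}^*\texttt{[}^*\texttt{)}^*\texttt{]}^*$) and $ww$ are not context-free. So there is no paper proof to diverge from; your contribution is a self-contained proof of the imported classical result, and it matches the textbook treatments (e.g., Sipser or Hopcroft--Ullman) that the original citation underlies. The only cosmetic refinement sometimes added is choosing a parse tree with the minimum number of nodes, which gives an alternative route to $\abs{vx} \geq 1$; your CNF sibling-subtree argument achieves the same thing and is sound.
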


Additionally, we will leverage the following communication complexity result to prove that certain languages are undefinable in $\CRASP$:

\begin{lemma}[\citealp{huang2025a}, Theorem 12] \label{lem:crasp-cc}
    Let $L$ be a language definable in $\CRASP$.
    Fix $w \in L$ and $1 \leq i \leq \lvert w \rvert$.
    Let Alice have access to $w_{1:i}$ and Bob have access to $w_{i+1:\lvert w \rvert}$.
    Then there exists a protocol where Alice sends at most $O(\log n)$ bits to Bob and Bob can recognize whether $w \in L$.
\end{lemma}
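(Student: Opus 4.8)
The plan is to exhibit an explicit one-way protocol in which Alice transmits the entire count-valued ``internal state'' of the \CRASP{} program evaluated at position $i$, and then argue that this state is both small enough to fit in $O(\log n)$ bits and sufficient for Bob to finish the computation using only his own half of the string.

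First I would recall the structure of a \CRASP{} program as an ordered list of operations $O_1, \dots, O_m$, where each $O_\ell$ is a symbol predicate $Q_\sigma$, a Boolean combination ($\wedge,\vee,\neg$) or comparison of earlier operations evaluated at the \emph{same} index, or a prefix-count $O_\ell(i) = \#[\,j \le i : O_{\ell'}(j)\,]$ with $\ell' < \ell$. The single-index-variable restriction is exactly what guarantees that every Boolean operation at index $i$ is a function only of $O_1(i), \dots, O_{\ell-1}(i)$ and the symbol $w_i$: it never compares $i$ against a distinct free index. Consequently the count-valued coordinates obey a left-to-right recurrence $O_\ell(i) = O_\ell(i-1) + [\,O_{\ell'}(i)\,]$, and I would argue that the tuple of count values at position $i$, together with the symbols $w_{i+1}, \dots, w_n$, deterministically produces every operation value at all later positions via a single forward pass.

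The protocol is then immediate. Alice computes the count-valued coordinates at position $i$ from her prefix $w_{1:i}$ and sends them. Since there is a constant number $m$ of operations and each count lies in $\{0, \dots, n\}$, the message has length $O(m \log n) = O(\log n)$. Holding this state and his suffix $w_{i+1:n}$, Bob resumes the forward recurrence to recompute all coordinates at positions $i+1, \dots, n$, and reads off membership from the acceptance condition on the output operation at $n$. The edge case $i = n$ is trivial, since Alice then holds the whole string and her message already determines the answer.

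The one genuine obstacle is verifying that the counts really do decompose additively across the cut, i.e.\ that each $O_\ell$ at a position $> i$ is reconstructible from the transmitted prefix value $O_\ell(i)$ plus a quantity computable from $w_{i+1:n}$ alone. I would make this an explicit induction on $\ell$: assuming Bob can reconstruct $O_1, \dots, O_{\ell-1}$ at all positions $> i$, the three cases (symbol predicate, Boolean/comparison, count) each go through, with the count case writing $\#[\,j \le t : O_{\ell'}(j)\,] = O_\ell(i) + \#[\,i < j \le t : O_{\ell'}(j)\,]$ so that the offset $O_\ell(i)$ comes from Alice and the second summand ranges only over Bob's indices. This decomposition is precisely where the single-index restriction is essential: were \CRASP{} to permit two free indices, as in \FOM{}'s $\mathsf{dindex}$ of~\eqref{eq:dindex}, which compares $\mathsf{depth}$ at two distinct positions, the counted predicate would straddle the cut and the additive split would fail, which is exactly why the analogous communication bound does not hold for \FOM{}.
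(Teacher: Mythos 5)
The paper never proves this lemma itself: it is imported as a black box from \citet{huang2025a} (their Theorem~12), so there is no in-paper argument to compare yours against. Judged on its own terms, your proof is correct and is the natural argument: because every \CRASP{} operation is either position-wise (a function of the other operations' values at the \emph{same} index and the current symbol) or a prefix count obeying the recurrence $C(j) = C(j-1) + [B(j)]$, the entire ``state'' of the program at the cut is the constant-length tuple of count values, each bounded by $n$; Alice ships this in $O(\log n)$ bits and Bob resumes the forward pass over his suffix. Your induction on the operation index, with the additive split of each count across the cut, is exactly the point where the single-index restriction is used, and your closing remark correctly identifies why the argument breaks for genuinely binary predicates such as $\mathsf{dindex}$ in~\eqref{eq:dindex}. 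Two small points to tighten: (i) full \CRASP{} also allows position-wise arithmetic on count-valued terms (sums, differences, comparison with constants); these fall under your ``Boolean/comparison'' case since they too depend only on values at the current index, but the case analysis should say so explicitly; (ii) your forward pass tacitly relies on counting being prefix-only (causal), which is indeed how \CRASP{}/$\KtSharp$ is defined---worth stating, since a variant with suffix counts would defeat the one-way protocol.
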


Crucially, if some $L$ requires Alice to send Bob $\omega(\log n)$ bits, then it cannot be defined in $\CRASP$.

We will also use the equivalence between \CRASP\ and the Temporal Counting Logic $\KtSharp$ to show that languages are definable in \CRASP.
\begin{lemma}[\citealp{yang2024counting}, Theorem 4.3] \label{lem:crasp-ktsharp}
    A \textsf{C-RASP} program recognizes language $L$ if and only if a $\KtSharp$ formula defines $L$.
\end{lemma}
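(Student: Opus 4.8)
The plan is to prove the two directions of the biconditional by structural induction, exploiting the fact that both formalisms are \emph{position-local}: a \CRASP{} program and a \KtSharp{} formula each assign, to every position $i$ of an input string $w$, a value that depends only on the prefix $w_{1:i}$. The key bookkeeping device is that both systems are two-sorted. \CRASP{} manipulates Boolean-valued sequences and count-valued (integer) sequences; likewise \KtSharp{} distinguishes \emph{formulas} (evaluated to truth values at each position) from \emph{count terms} (evaluated to integers). The inductive invariant I would maintain is that Boolean \CRASP{} operations correspond to \KtSharp{} formulas and count-valued \CRASP{} operations correspond to \KtSharp{} count terms, with the correspondence preserving the value computed at \emph{every} position $i$, not merely the last one.

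For the forward direction (\CRASP{} $\Rightarrow$ \KtSharp{}), I would induct on the structure of the \CRASP{} program. The base case maps each token predicate $Q_\sigma$ to the corresponding atomic proposition of \KtSharp{}. Boolean connectives ($\wedge, \vee, \neg$) translate verbatim, since both systems apply them pointwise. The one essential operation is the prefix-count $\sinceC{\phi}$, which sums a Boolean sequence over all positions $j \le i$; by the inductive hypothesis the summand corresponds to a \KtSharp{} formula, and the prefix-count is exactly the \KtSharp{} counting modality applied to that formula. Finally, \CRASP{}'s comparison operations, which test affine combinations of count sequences against constants, map onto the atomic comparison formulas of \KtSharp{}, which permit precisely the same affine comparisons of count terms.

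For the backward direction (\KtSharp{} $\Rightarrow$ \CRASP{}), I would reverse each clause: atomic propositions become token predicates, Boolean connectives are applied pointwise, the counting modality becomes the prefix-count operation, and each atomic comparison formula becomes the corresponding \CRASP{} count-comparison. Because every syntactic construct on each side has an exact counterpart on the other, the two translations are mutually inverse at the level of semantics. Recognition then follows immediately: a \CRASP{} program recognizes $L$ iff its Boolean output at the final position of $w$ is true exactly when $w \in L$, and a \KtSharp{} formula defines $L$ iff it is satisfied at the final position exactly when $w \in L$; since the translations preserve the value at every position, they preserve it at the last position, giving language equivalence.

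I expect the main obstacle to lie not in the inductive skeleton, which is routine, but in verifying that the \emph{arithmetic and comparison fragments coincide exactly}. The equivalence is delicate precisely because both formalisms forbid cross-position comparisons (the single current index $i$ in \CRASP{} mirrors the single evaluation time of a tense-logical formula) and restrict comparisons to affine combinations of counts against constants. I would therefore spend most of the effort checking that neither side can express something the other cannot---e.g., that \CRASP{} cannot form a count referencing two distinct indices, and that \KtSharp{} count terms are confined to the same affine-comparison regime---so that the clause-by-clause correspondence is genuinely tight rather than merely an inclusion in one direction.
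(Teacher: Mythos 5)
This lemma is not proved in the paper at all: it is imported verbatim as a citation (\citealp{yang2024counting}, Theorem~4.3) and used as a black box to certify that $1$-Dyck and $k$-Shuffle Dyck are \CRASP-definable via explicit \KtSharp{} formulas. So there is no in-paper proof to compare your attempt against; the relevant comparison is with the proof in the cited source. Judged on that basis, your plan is essentially the right one, and it is in fact how the equivalence is established there: \CRASP{} was designed by Yang and Chiang as a programming-language presentation of the temporal counting logic \KtSharp, so the proof is a clause-by-clause mutual translation, by structural induction, that preserves the value computed at every position --- Boolean operations map to formulas, prefix counts $\sinceC{\,\cdot\,}$ map to the counting modality, and count comparisons map to the atomic comparison formulas, exactly as you describe. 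Your instinct that the two-sorted (Boolean/count) structure must be respected, and that the only delicate point is checking that the arithmetic-comparison fragments of the two formalisms coincide exactly (neither side can smuggle in a cross-position or binary-index comparison, which is precisely what separates \CRASP{} from \FOM), is also where the real content lies. The one caveat is that what you have written is a plan rather than a proof: you never fix the actual grammars of \CRASP{} and \KtSharp{} or exhibit the translation clauses, and the claim that the translations are ``mutually inverse at the level of semantics'' is asserted rather than checked. Completing it would be routine but is exactly the bookkeeping the cited theorem does for you; the paper's choice to cite rather than reprove is the pragmatic one.
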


\subsection{Language Characterizations}

\begin{proposition}
$1$-Dyck is context-free and definable in \textsf{C-RASP}.
\end{proposition}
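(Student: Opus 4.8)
The plan is to prove the two claims separately, since they concern different hierarchies. For context-freeness, I would simply exhibit a context-free grammar generating $1$-Dyck, which is entirely routine: the grammar $S \to (\, S\, )\, S \mid \varepsilon$ generates exactly the well-balanced strings over a single bracket type. I would then note that $1$-Dyck is a standard example of a context-free but non-regular language, so no further argument is needed for this half. The easy direction is therefore this one.

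The more substantive claim is definability in \CRASP. Here the paper has already done the hard work in the main text: equation \eqref{eq:1dyck} exhibits an \FOM\ program defining $1$-Dyck using the $\mathsf{depth}$ function, which counts open minus close brackets up to position $i$. The key observation is that this program only ever refers to a single index variable $i$ inside each count---the predicates $Q_((i)$ and $Q_)(i)$ depend on one index, and $\mathsf{depth}(i)$ is a prefix count, not a comparison between two distinct quantified indices. This is precisely the restriction distinguishing \CRASP\ from \FOM\ described in \S\ref{sec:circuit-background}. So unlike $k$-Dyck for $k \geq 2$, which needs the two-index function $\mathsf{dindex}$ in \eqref{eq:dindex}, the $1$-Dyck program stays within \CRASP's single-index budget.

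To make this rigorous rather than hand-wavy, I would invoke \Cref{lem:crasp-ktsharp}, which states that a language is \CRASP-definable if and only if it is defined by a $\KtSharp$ formula. So the cleanest route is to write down a $\KtSharp$ formula for $1$-Dyck. The balance condition $\mathsf{depth}(n) = 0$ becomes a temporal counting statement that the total count of $($ equals the total count of $)$ over the whole string, and the prefix condition $\forall i,\ \mathsf{depth}(i) \geq 0$ becomes the statement that at no position does the running count of $)$ exceed the running count of $($. Both of these are expressible as counting comparisons over prefixes using a single position, which is exactly what the temporal counting operators of $\KtSharp$ provide.

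The main obstacle, such as it is, is purely presentational: I expect the only real work is carefully confirming that the prefix-nonnegativity condition is genuinely a one-index (temporal) counting statement and does not smuggle in a comparison between two independently quantified positions. Once that is checked, the proof is immediate---context-freeness from the grammar, and \CRASP-definability either by the single-index reading of \eqref{eq:1dyck} together with the \CRASP/\FOM\ distinction, or more formally via the $\KtSharp$ formula and \Cref{lem:crasp-ktsharp}. There is no hard inequality or pumping argument here, in contrast to the later propositions that must separate context-sensitive from context-free or rule out \CRASP-definability via \Cref{lem:crasp-cc}.
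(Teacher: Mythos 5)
Your proposal is correct and follows essentially the same route as the paper: the identical grammar $S \to \texttt{(}\,S\,\texttt{)}\,S \mid \varepsilon$ for context-freeness, and \CRASP-definability via \Cref{lem:crasp-ktsharp} using a $\KtSharp$ formula whose two conjuncts (total counts of \texttt{(} and \texttt{)} are equal; the running count of \texttt{)} never exceeds that of \texttt{(}) are exactly the paper's $\left(\sinceC{Q_{(}}=\sinceC{Q_{)}}\right)\land \left(\sinceC{\sinceC{Q_{)}}>\sinceC{Q_{(}}}=0\right)$. The only difference is presentational: you describe the formula in words rather than writing it in the $\sinceC{\cdot}$ notation, but the content is the same.
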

\begin{proof}
That $1$-Dyck is context-free follows from the fact that it can be generated by the following context-free grammar:
\begin{align*}
    S &\to \texttt{(}\, S\, \texttt{)}\, S, \\
    S &\to \varepsilon.
\end{align*}
$1$-Dyck is defined by the following $\KtSharp$ formula 
$$
\left(\sinceC{Q_{(}}=\sinceC{Q_{)}}\right)\land \left(\sinceC{\sinceC{Q_{)}}>\sinceC{Q_{(}}}=0\right),
$$
and so is implementable in \textsf{C-RASP} by \Cref{lem:crasp-ktsharp}.
\end{proof}

\begin{proposition} \label{prop:kdyck}
    For $k \geq 2$, $k$-Dyck is context-free and not definable in $\CRASP$.
\end{proposition}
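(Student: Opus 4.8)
The plan is to treat the two claims separately. For context-freeness I would exhibit an explicit grammar generalizing the $1$-Dyck grammar from the previous proposition: with start symbol $S$, take the rule $S \to \varepsilon$ together with one rule $S \to \texttt{(}_j\, S\, \texttt{)}_j\, S$ for each bracket type $j \in \{1, \dots, k\}$, where $\texttt{(}_j$ and $\texttt{)}_j$ denote the opening and closing brackets of type $j$. A routine induction on derivation length shows this grammar generates exactly the well-nested strings over the $k$ bracket types, i.e.\ $k$-Dyck, so $k$-Dyck is context-free.

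For the negative result, the approach is to invoke the communication-complexity characterization of \Cref{lem:crasp-cc}: since any $\CRASP$-definable language admits a one-way protocol in which Alice sends only $O(\log n)$ bits, it suffices to exhibit a family of inputs on which correctly deciding $k$-Dyck membership forces $\omega(\log n)$ bits of communication. I would place the cut exactly at the midpoint of strings of length $n = 2m$, so that Alice holds a block of $m$ opening brackets and Bob holds a block of $m$ closing brackets.

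The key step is a fooling-set argument. For each string $\sigma \in \{1, \dots, k\}^m$ of bracket types, let $w_\sigma$ be the string whose first half is $\texttt{(}_{\sigma_1} \cdots \texttt{(}_{\sigma_m}$ and whose second half is the matching close sequence $\texttt{)}_{\sigma_m} \cdots \texttt{)}_{\sigma_1}$; each such $w_\sigma$ lies in $k$-Dyck. The crucial observation is that if Alice holds the prefix of $w_\sigma$ and Bob holds the suffix of $w_\tau$, then well-nesting forces the closers to match the openers in reverse order and type, so the concatenated string is in $k$-Dyck if and only if $\sigma = \tau$. Hence if two distinct prefixes $\sigma \neq \tau$ induced the same message from Alice, then Bob---holding the suffix matching $\sigma$---would be forced to return the same answer on the genuine string $w_\sigma \in L$ and on the mismatched string (Alice's $\tau$-prefix with Bob's $\sigma$-suffix) $\notin L$, a contradiction. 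Therefore Alice must send a distinct message for each of the $k^m$ prefixes, requiring at least $\log_2(k^m) = m\log_2 k \geq m$ bits since $k \geq 2$. With $m = n/2$ this is $\Omega(n)$, hence $\omega(\log n)$, contradicting \Cref{lem:crasp-cc}; so $k$-Dyck is not definable in $\CRASP$.

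I expect the main obstacle to be stating the fooling-set argument cleanly against the exact form of \Cref{lem:crasp-cc}: the lemma is phrased for a fixed $w \in L$ and a fixed cut $i$, so some care is needed to interpret it as providing a single length-parameterized, one-way protocol that must succeed simultaneously on every $w_\sigma$, and to confirm that a one-way (Alice-to-Bob) protocol is precisely what the crossed-input fooling set rules out. Once the protocol model is pinned down, the counting bound---$k^m$ distinct messages implies $\Omega(n)$ bits---is routine.
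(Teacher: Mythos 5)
Your proposal is correct and follows essentially the same route as the paper: the identical context-free grammar for the positive claim, and for the negative claim the same communication-complexity argument via \Cref{lem:crasp-cc}, cutting strings of $m$ openers followed by $m$ closers at the midpoint. Your explicit fooling-set construction (the $k^m$ strings $w_\sigma$ with matched reversed closers) is simply the rigorous spelling-out of the paper's one-line justification that ``each prefix has a different unique suffix that closes it,'' so the two proofs coincide in substance.
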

\begin{proof}
That $k$-Dyck is context-free follows from the fact that it can be generated by a context-free grammar: for any fixed value of $k$, $k$-Dyck is generated by 
\begin{align*}
S &\to \texttt{(}_{i} \,S\, \texttt{)}_{i}\, S, \quad\text{where}\; i \in [k] \\
S &\to \varepsilon
\end{align*}

To see that $k$-Dyck is not definable in $\CRASP$, consider Dyck strings of length $2n$ where tokens $1$ to $n$ are opening braces, and tokens $n+1$ to $2n$ are closing braces.
If Alice receives the first $n$ tokens, she must send $\Omega(n)$ bits to Bob if Bob is to correctly recognize the input string, because each prefix has a different unique suffix that closes it.
So $k$-Dyck is not in $\CRASP$ by Lemma \ref{lem:crasp-cc}.
\end{proof}

On the other hand, $k$-Dyck can be defined in $\FOM$.

\begin{proposition} \label{prop:kdyck-fom}
    For $k \geq 1$, $k$-Dyck is definable in $\FOM$.
\end{proposition}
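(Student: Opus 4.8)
The plan is to build directly on the $2$-Dyck construction sketched in \S\ref{sec:circuit-background} and generalize it to an arbitrary but fixed $k$. Because the proposition asserts definability \emph{for each fixed} $k$, the bracket alphabet has constant size, so any disjunction or conjunction ranging over the $k$ bracket types is a finite Boolean combination and is therefore legal inside a single $\FOM$ sentence. I write $\mathsf{open}(j) \equiv \bigvee_{t=1}^{k} Q_{(_{t}}(j)$ and $\mathsf{close}(j) \equiv \bigvee_{t=1}^{k} Q_{)_{t}}(j)$ for the type-erased open/close predicates.

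First I would collapse the types and impose a $1$-Dyck \emph{backbone}. Define $\mathsf{depth}(i) \equiv \#j \le i [\mathsf{open}(j)] - \#j \le i[\mathsf{close}(j)]$, which is exactly the depth of \eqref{eq:1dyck} evaluated on the type-erased string. The two conjuncts of \eqref{eq:1dyck}, $\mathsf{depth}(n) = 0$ and $\#i \le n[\mathsf{depth}(i) < 0] = 0$, assert that this type-erased string is a well-formed $1$-Dyck word, which in particular guarantees that every bracket participates in a unique matched pair. Next I would express the matching relation itself: over a valid backbone, the open at $i$ is matched to the close at $j$ exactly when $i < j$, $\mathsf{depth}(i) = \mathsf{depth}(j) + 1$, and depth never returns below that level in between, i.e. $\#m \le n[\, i \le m < j \wedge \mathsf{depth}(m) < \mathsf{depth}(i)\,] = 0$. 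Since $\mathsf{depth}$ is itself a counting term that $\FOM$ can subtract and compare, this yields a two-variable predicate $\mathsf{match}(i,j)$. (Equivalently, I could follow the hint in \S\ref{sec:circuit-background} and pair brackets using $\mathsf{depth}$ together with $\mathsf{dindex}$ from \eqref{eq:dindex}; both routes are equally expressible in $\FOM$.)

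Finally I would enforce type agreement on matched pairs. Set $\mathsf{sametype}(i,j) \equiv \bigvee_{t=1}^{k}\bigl(Q_{(_{t}}(i) \wedge Q_{)_{t}}(j)\bigr)$, a constant-size disjunction, and take the full sentence to be the conjunction of the backbone condition \eqref{eq:1dyck} with
\[
\#i \le n \;\; \#j \le n \,\bigl[\, \mathsf{match}(i,j) \wedge \neg\, \mathsf{sametype}(i,j) \,\bigr] = 0 .
\]
Correctness follows in two directions: the backbone forces the shape to be Dyck with every bracket uniquely matched, and the last conjunct forces each matched open/close pair to carry the same bracket type, which is precisely the definition of $k$-Dyck.

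The main obstacle is purely the verification that $\mathsf{match}(i,j)$ singles out exactly the canonical Dyck pairing and nothing spurious: one must check that the no-dip clause, combined with $\mathsf{depth}(i) = \mathsf{depth}(j)+1$, makes $j$ the \emph{unique} close matched to $i$ at every depth level (and, on the $\mathsf{dindex}$ route, that matched opens and closes sharing a bracket-depth receive consecutive occurrence indices). Once the matching predicate is pinned down, the remaining logical-expressibility bookkeeping—confirming that each subformula uses only counting quantifiers, the arithmetic comparisons $+,-,=,<$, and the fixed finite set of base predicates $Q_{(_{t}}, Q_{)_{t}}$—is routine.
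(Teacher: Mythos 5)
Your construction is essentially the paper's: a type-erased $1$-Dyck backbone via $\mathsf{depth}$, a two-index pairing predicate, and a constant-size disjunction enforcing type agreement on paired brackets. The only substantive difference is that you pin down the pairing with a ``no-dip'' condition where the paper uses the occurrence-index predicate $\mathsf{dindex}$ from \eqref{eq:dindex}; these are interchangeable in $\FOM$, since both hinge on predicates relating two index variables, which is exactly what $\FOM$ permits and $\CRASP$ forbids.

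There is, however, one concrete defect in the formula as displayed: your $\mathsf{match}(i,j)$ never requires $\mathsf{open}(i)$ (nor $\mathsf{close}(j)$). The conditions $\mathsf{depth}(i)=\mathsf{depth}(j)+1$ and no-dip do force $j$ to be a closing bracket, but they do not force $i$ to be an opening one. Take the valid string \texttt{(\,(\,)\,)} with depths $1,2,1,0$: the pair $(i,j)=(3,4)$ --- two closing brackets --- satisfies your $\mathsf{match}$, yet $\mathsf{sametype}(3,4)$ is false because $\mathsf{sametype}$ requires position $i$ to carry an opening symbol. Hence your sentence $\#i\le n\,\#j\le n\,[\mathsf{match}(i,j)\wedge\neg\mathsf{sametype}(i,j)]=0$ rejects this (and essentially every nested) valid string. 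The fix is a single conjunct: add $\mathsf{open}(i)\wedge\mathsf{close}(j)$ to $\mathsf{match}$, exactly as your prose (``the open at $i$ \ldots\ the close at $j$'') already intends; with it, your uniqueness argument for the canonical pairing goes through and the proof is correct. For what it is worth, the paper's own final clause $\forall i \le n\,[\mathsf{closed}(i)]$ is informal in the same way --- opening positions cannot satisfy its $\mathsf{match}(j,i)$ --- so both write-ups need the quantification restricted to the relevant kind of bracket.
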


\begin{proof}
Let $Q_((i)$ check whether token $i$ is \textit{any} of the $k$ opening parentheses, and $Q_{(_\kappa}(i)$ check whether token $i$ is the $\kappa$th opening parenthesis out of $k$. Continuing the definition from Section \ref{sec:circuit-background}:
\begin{align*}
    &\mathsf{depth}(i) \;\; \equiv \;\; \# j \leq i [Q_((i)] - \# j \leq i [Q_)(i)] \nonumber \\
    &\mathsf{dindex}(i) \;\; \equiv \;\; \#j \leq i [\mathsf{depth}(i) = \mathsf{depth}(j) ] \\
    &\mathsf{paired}(j, i) \;\; \equiv \;\; [\mathsf{depth}(j) = \mathsf{depth}(i) + 1] \wedge \\
    &\quad [\mathsf{dindex}(i) = \mathsf{dindex}(j)] \\
    &\mathsf{match}(j, i) \;\; \equiv \;\; \bigvee_\kappa  [Q_{(_\kappa}(j) \wedge Q_{)_\kappa}(i)] \\
    &\mathsf{closed}(i)  \;\; \equiv \;\; \exists j \leq i \left[ \mathsf{paired}(j, i) \wedge \mathsf{match}(j, i) \right]
\end{align*}

Having defined these macros, we are now ready to write the recognizer for $k$-Dyck:
\begin{align*}
    &[\mathsf{depth}(n) = 0] \wedge [\# i \leq n [ \mathsf{depth}(i) < 0] = 0]  \wedge \\
    &\quad \forall i \leq n \left[ \mathsf{closed}(i) \right]
\end{align*}
\end{proof}

To understand why this construction cannot be implemented in $\CRASP$,
observe that $\mathsf{paired}(j, i)$ and $\mathsf{match}(j, i)$ are binary predicates, which are not allowed in $\CRASP$.

\begin{lemma}
    For $k \geq 2$, $k$-Shuffle Dyck is strictly context-sensitive and definable in \textsf{C-RASP}.
\end{lemma}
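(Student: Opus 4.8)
The plan is to establish the three claims in the order: $\CRASP$ definability first (which immediately yields context-sensitivity), then non-context-freeness (which upgrades this to \emph{strictly} context-sensitive). The key structural observation driving the whole proof is that a string lies in $k$-Shuffle Dyck exactly when, for each brace type $\kappa$ separately, the subsequence of that type's open and close braces forms a valid $1$-Dyck string. In other words, $k$-Shuffle Dyck is the conjunction of $k$ independent $1$-Dyck conditions, one per brace type --- precisely the ``interleaving of $k$ $1$-Dyck strings'' characterization.

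For $\CRASP$ definability, I would invoke the $\KtSharp$ equivalence (Lemma~\ref{lem:crasp-ktsharp}) and exhibit a single $\KtSharp$ formula. Reusing the $1$-Dyck formula from the first proposition, I would take the per-type conjunction
$$\bigwedge_{\kappa=1}^{k}\left[\left(\sinceC{Q_{(_{\kappa}}}=\sinceC{Q_{)_{\kappa}}}\right)\land\left(\sinceC{\sinceC{Q_{)_{\kappa}}}>\sinceC{Q_{(_{\kappa}}}}=0\right)\right],$$
where $Q_{(_{\kappa}}$ and $Q_{)_{\kappa}}$ are the predicates for the $\kappa$-th open and close brace. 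The first conjunct enforces that type $\kappa$ is balanced, and the second that no prefix has more closes than opens of type $\kappa$; since $k$ is fixed this is a finite conjunction and hence a legal $\KtSharp$ formula. The crucial point --- and the reason this succeeds where the analogous $k$-Dyck argument fails --- is that every count here refers to a single index, so the formula never compares depths at two positions (the binary comparison that ruled $k$-Dyck out of $\CRASP$). Context-sensitivity is then immediate: $\CRASP$ is a restriction of $\FOM$, so the language is $\FOM$-definable, and Lemma~\ref{lem:fom-csl} gives context-sensitivity.

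To show $k$-Shuffle Dyck is not context-free, I would apply the pumping lemma stated above to the witness $s=(^{p}[^{p})^{p}]^{p}$, using only the first two brace types (the remaining $k-2$ types have count zero and are trivially balanced), where $p$ is the pumping constant. Since $\abs{vwx}\le p$ and each of the four equal-length blocks has length $p$, the factor $vwx$ can overlap at most two adjacent blocks, so pumping alters the counts of symbols drawn from at most two adjacent blocks. I would then check the few cases: whichever pair of adjacent blocks $vx$ touches, pumping down to $n=0$ changes the open-count or close-count of some brace type on only one side, leaving the matching count on the other side untouched and thereby destroying the balance for that type. Since $\abs{vx}\ge 1$, at least one count is altered in every case, so the pumped string is unbalanced and leaves the language, contradicting the pumping lemma.

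The main obstacle is not conceptual but careful bookkeeping in the pumping argument: one must confirm that \emph{every} admissible placement of $vwx$ --- within a single block, or straddling blocks $1$--$2$, $2$--$3$, or $3$--$4$ --- breaks some type's open/close balance, paying particular attention to the blocks $2$--$3$ case, where $v$ removes open-squares and $x$ removes close-rounds, so that one must verify both balances cannot simultaneously be preserved. The $\CRASP$ construction, by contrast, is clean once the per-type decomposition is in hand; its only sensitive point is confirming that the single-index counting restriction of $\KtSharp$ is genuinely respected.
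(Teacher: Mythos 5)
Your proof is correct and follows essentially the same route as the paper: the identical per-type $\KtSharp$ conjunction (invoking Lemma~\ref{lem:crasp-ktsharp}) for $\CRASP$-definability, context-sensitivity via Lemma~\ref{lem:fom-csl}, and a pumping argument on the witness $\texttt{(}^p\texttt{[}^p\texttt{)}^p\texttt{]}^p$. The one structural difference is in the pumping setup: the paper first intersects $k$-Shuffle Dyck with the regular language $\texttt{(}^*\texttt{[}^*\texttt{)}^*\texttt{]}^*$ to obtain $\{\texttt{(}^n\texttt{[}^m\texttt{)}^n\texttt{]}^m\}$ (using closure of CFLs under intersection with regular languages) and pumps there, whereas you pump directly inside Shuffle Dyck; your version is sound because the opens and closes of each brace type sit in non-adjacent blocks, so a factor $vwx$ of length at most $p$ can never delete compensating opens and closes of the same type, and any nonempty deletion therefore breaks some type's balance. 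If anything, your handling of general $k$ --- a two-type witness placed directly inside $k$-Shuffle Dyck, with the unused types vacuously balanced --- is cleaner than the paper's closing remark that $k$-Shuffle Dyck ``contains'' $2$-Shuffle Dyck, which as stated is not by itself a valid inference (a superset of a non-CF language need not be non-CF) and really stands in for exactly the argument you gave.
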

\begin{proof}
\emph{See Ex.\ 7.20 in \citet{hopcroft-2006-introduction}}.
Consider the case when $k=2$.
Assume that $2$-shuffle Dyck is context-free.
Then $L = \texttt{(}^n \texttt{[}^m \texttt{)}^n \texttt{]}^m$ is context-free since it is the intersection of $k$-Shuffle Dyck with $\texttt{(}^*\texttt{[}^*\texttt{)}^*\texttt{]}^*$ and CFLs are closed under intersection with regular languages.

Assume by way of contradiction that $L$ is context-free and so has pumping constant $p$. Let $s = \texttt{(}^p \texttt{[}^p \texttt{)}^p \texttt{]}^p$, whichy by hypothesis can be written as $uvwxy$. Since $\abs{vwx} \leq p$, it either (a) lies entirely inside one of the blocks of $p$ symbols or (b) lies partially in one block of $p$ symbols and lies partially in at most one adjacent block. In the case of (a), suppose for clarity that $vwx$ lies entirely in the $\texttt{(}^p$ block. Since ${vx}$ is not empty, $uv^0wx^0y \equiv uwy$ contains fewer $\texttt{(}$'s than $\texttt{)}$'s, and hence is not in $L$, a contradiction. In the case of (b), suppose for clarity that $vwx$ straddles the $\texttt{(}^p$ and $\texttt{[}^p$ blocks. Since ${vx}$ is not empty, $uv^0wx^0y \equiv uwy$ contains either fewer $\texttt{(}$'s than $\texttt{)}$'s or fewer $\texttt{[}$'s than $\texttt{]}$'s, and hence is not in $L$, a contradiction. Since $k$-Shuffle Dyck for $k > 2$ contains $2$-Shuffle Dyck, proving the $k=2$ case is sufficient to establish that $k$-Shuffle Dyck is not context free (but still context-sensitive by \Cref{lem:fom-csl}).


Similar to the $1$-Dyck case, we can exhibit a $\KtSharp$ formula to define $k$-Shuffle Dyck:
$$
\bigwedge_\kappa \left(\sinceC{Q_{(_\kappa}}=\sinceC{Q_{)_\kappa}}\right)\land \left(\sinceC{\sinceC{Q_{)_\kappa}}>\sinceC{Q_{(_\kappa}}}=0\right)
$$
So $k$-Shuffle Dyck is likewise definable in \textsf{C-RASP} by \Cref{lem:crasp-ktsharp}.
\end{proof}

\begin{proposition}
    $ww$ is strictly context-sensitive and not definable in $\CRASP$.
\end{proposition}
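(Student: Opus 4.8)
The plan is to establish the three assertions separately: that $ww$ is context-sensitive, that it is not context-free, and that it is not definable in $\CRASP$.

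For context-sensitivity I would show $ww$ is definable in $\FOM$ and then invoke \Cref{lem:fom-csl}, exactly as the placement in the $2\times2$ table suggests. A length-$n$ string $s$ lies in $ww$ precisely when $n$ is even and each position in the first half agrees with the position $n/2$ ahead of it. Since $\FOM$ predicates may refer to two index variables (unlike $\CRASP$), I can use a binary predicate $\mathsf{mid}(i,j) \equiv [j + j = i + i + n] \wedge [i < j]$, which holds exactly when $j = i + n/2$, together with $\mathsf{samesym}(i,j) \equiv \bigvee_{a}[Q_a(i) \wedge Q_a(j)]$, and write the recognizer
\[
    \bigl[\exists m \leq n\,(m + m = n)\bigr] \;\wedge\; \forall i \forall j \leq n\,\bigl[\mathsf{mid}(i,j) \rightarrow \mathsf{samesym}(i,j)\bigr].
\]
The first conjunct enforces even length and the second enforces the copy constraint; odd lengths and non-matching pairs are correctly rejected. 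By \Cref{lem:fom-csl}, $\FOM$-definability gives context-sensitivity.

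To show $ww$ is not context-free I would reuse the intersect-then-pump strategy applied to $k$-Shuffle Dyck. Intersecting $ww$ with the regular language $a^+b^+a^+b^+$ yields $\{a^i b^j a^i b^j : i, j \geq 1\}$: a string of this block shape that equals some $tt$ must have $t$ beginning with $a$ and ending with $b$ (so the two copies do not merge), forcing $t = a^i b^j$ and hence equal first and second $a$-blocks and $b$-blocks. Because context-free languages are closed under intersection with regular languages, it suffices to pump $\{a^i b^j a^i b^j\}$. Taking the witness $s = a^p b^p a^p b^p$ and writing $s = uvwxy$ with $\abs{vwx} \leq p$ and $\abs{vx} \geq 1$, the factor $vwx$ can meet at most two \emph{adjacent} blocks among the four length-$p$ blocks. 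Since the two $a$-blocks are non-adjacent, and likewise the two $b$-blocks, pumping down to $uv^0wx^0y \equiv uwy$ strictly decreases the count of one block while leaving its partner at $p$, producing a string outside $\{a^i b^j a^i b^j\}$ in every case—a contradiction.

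Finally, to rule out $\CRASP$ I would apply the communication bound of \Cref{lem:crasp-cc}. Restrict to strings $xx$ with $x \in \{a,b\}^n$ and split at the midpoint $i = n$, so Alice holds $x$ and Bob holds the second half. The set $\{x : x \in \{a,b\}^n\}$ is a fooling set of size $2^n$: if Alice sent the same message for two distinct halves $x \neq x'$, then pairing Alice's $x'$ with Bob holding a copy of $x$ would make Bob accept $x'x \notin ww$, a contradiction. Hence Alice must transmit at least $n$ bits—super-logarithmic in the input length $2n$—contradicting the $O(\log n)$ allowance, so $ww \notin \CRASP$.

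The conceptually clean part is the $\CRASP$ lower bound, which is an immediate fooling-set argument. The main obstacle is the bookkeeping in the non-context-free step: verifying that the intersection with $a^+b^+a^+b^+$ is \emph{exactly} $\{a^i b^j a^i b^j\}$ (via the block-merging analysis of $tt$) and then carrying out the placement case analysis for $vwx$ carefully enough to cover the midpoint-straddling case. A secondary point to check is that the arithmetic comparison $j + j = i + i + n$ and the even-length test are legitimately expressible in $\FOM$, which follows from its closure under addition and counting.
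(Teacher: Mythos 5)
Your proposal is correct, and it diverges from the paper's proof in an interesting way on the non-context-freeness step. The paper pumps $ww$ directly: it takes $s = \texttt{a}^p\texttt{b}^p\texttt{a}^p\texttt{b}^p$, assumes $uwy = tt$, and derives contradictions by reasoning about where the last symbol of $t$ falls relative to the block structure---a case analysis that is somewhat delicate (it needs the symmetry appeal at the end to cover all placements of $vx$). You instead reuse the intersect-then-pump strategy that the paper itself deploys only for $k$-Shuffle Dyck: intersect $ww$ with $\texttt{a}^+\texttt{b}^+\texttt{a}^+\texttt{b}^+$, verify the intersection is exactly $\{\texttt{a}^i\texttt{b}^j\texttt{a}^i\texttt{b}^j\}$ via the block-merging argument, and pump that language, where the case analysis reduces to the simple observation that $vwx$ meets at most two adjacent blocks while the equal-count constraints pair non-adjacent ones. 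This buys a cleaner pumping argument at the cost of the extra verification that the intersection is what you claim (which you do correctly: $t$ must start with $\texttt{a}$ and end with $\texttt{b}$, so no blocks merge and $t = \texttt{a}^i\texttt{b}^j$). Your treatment of context-sensitivity is also more complete than the paper's: the paper invokes \Cref{lem:fom-csl} without ever exhibiting an $\FOM$ definition of $ww$, whereas you construct the binary predicates $\mathsf{mid}(i,j)$ and $\mathsf{samesym}(i,j)$ explicitly, filling a gap the paper leaves implicit. On the $\CRASP$ lower bound you and the paper take the same route through \Cref{lem:crasp-cc}; your fooling-set elaboration (distinct halves $x \neq x'$ would force Bob to accept $x'x \notin ww$) just makes explicit the $\Omega(n)$ claim the paper states in one sentence.
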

\begin{proof}
    \emph{See Ex.\ 7.21 in \citet{hopcroft-2006-introduction}}.
    Suppose by way of contradiction that $ww$ is context-free, and so has a pumping constant $p$. Let $s = \texttt{a}^p\texttt{b}^p\texttt{a}^p\texttt{b}^p$, which can be written as $uvwxy$ by hypothesis. Consider then the string $uv^0wx^0y \equiv uwy \in L$. We examine two cases depending on the position of $vwx$ in $s$. 
    
    In the first case, suppose $vx$ is contained entirely within the first block of $\texttt{a}^p$. If $\abs{vx} = k$ then $uwy$ has length $4p-k$ and begins with the substring $\texttt{a}^{(p-k)}\texttt{b}^p...$ of length $2p-k$. By assumption $uwy = tt$ for some $t$ of length $2p-k/2$, and since $k \geq 1$ it follows that $\abs{t} > 2p-k$. Then the final symbol of $t$ must lie within the \emph{second} block of $\texttt{a}$'s; yet since $s$ ends in $\texttt{b}$, $tt$ must also end in $\texttt{b}$, a contradiction.

    In the second case, suppose $vx$ contains some $\texttt{a}$'s and some $\texttt{b}$'s. Since $\abs{uvwxy} = 4p$ and $\abs{vwx} \leq p$ it must be that $\abs{uwy} \geq 3p$ and so $\abs{t} = 3p/2$. Since $vwx$ is too short to straddle more than two adjacent blocks of symbols and $3p/2 > p$ it must be the case that $t$ must end in $\texttt{b}^p$. Yet there since $\abs{vx} \geq 1$, there is only a single block of $\texttt{b}^p$ within $\abs{uwy}$, so the $\texttt{b}^p$ block cannot be repeated, a contradiction.

    By symmetry, these two cases straightforwardly extend to the cases when $vx$ is contained entirely within the first block of $\texttt{b}$'s, the second block of $\texttt{a}$'s, or the second block of $\texttt{b}$'s (analogous to case 1); or when it is split between the blocks of $\texttt{a}$'s and $\texttt{b}$'s (case 2). Then $ww$ is not context-free, but still context-sensitive by \Cref{lem:fom-csl}.


    From a communication complexity perspective, if Alice has the first half of some string, and Bob has the second half, Alice must send Bob $\Omega(n)$ bits to verify whether the string is of the form $ww$.
    Thus, by \Cref{lem:crasp-cc}, $ww$ cannot be defined in $\CRASP$.
\end{proof}

\begin{figure}[htbp] 
    \centering 
    \includegraphics[width=0.95\columnwidth]{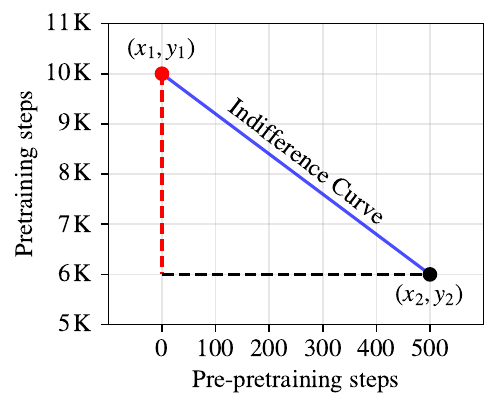} 
    \caption{The indifference curve contains points with equal training loss. Marginal rate of substitution is the ratio between the red and black lines $\left( \frac{|y_1-y_2|}{|x_1-x_2|} \right)$. The token efficiency increase from applying pre-pretraining can be calculated as $1 - \frac{x_2  + y_2}{y_1}$.} 
    \label{fig:mrs} 
\end{figure}

\section{Hyperparameters}
\label{app:hyperparams}

All experiments were done on NVIDIA A100 or H100 80GB GPUs. We warm up the learning rate both during pre-pretraining and pretraining. The below hyperparameters hold for both pre-pretraining and pretraining. That is, for simplicity, even if we only pre-pretrain for 500 steps, we still keep the learning rate warmup at 1,000 steps. 
To achieve 50\% attention head sparsity when pruning, we set the target sparsity to 70\%. We used Huggingface \texttt{transformers==4.47.0} and \texttt{datasets==3.2.0}.

\begin{table}[h]    

\begin{tabular}{ll}
\toprule
\textbf{Hyperparameter} & \textbf{Value} \\
\midrule
\multicolumn{2}{l}{\textit{Training Configuration}} \\
\midrule
Batch size & 16 \\
Gradient accumulation & 2 \\
Effective bsz & 32 \\
Sequence length & 2048 tokens \\
Learning rate & $5 \times 10^{-4}$ \\
LR schedule & Cosine w/ warmup \\
Min. LR & $5 \times 10^{-5}$ \\
Warmup Steps & 1000 \\
Weight Decay & 0.1 \\
Gradient Clipping & 1.0 \\
\midrule
\multicolumn{2}{l}{\textit{Optimization}} \\
\midrule
Optimizer & AdamW \\
$\beta_1, \beta_2$ & 0.9, 0.999 \\
$\epsilon$ & 1e-6 \\
Mixed Precision & bf16 \\
\midrule
\multicolumn{2}{l}{\textit{Pruning} (see \citet{bhaskar-etal-2024-heuristic})} \\
\midrule
Learning rate & 0.1 \\
Regularization LR & 1 \\
Target sparsity & 0.7 \\
Warmup steps & 1000 \\
\bottomrule
\end{tabular}
\caption{Training hyperparameters.}
\label{tab:lm-hyperparams}
\end{table}

\newcommand{\posex}[1]{\textcolor{green!50!black}{\ding{51} #1}}
\newcommand{\negex}[1]{\textcolor{black}{\ding{55} #1}}

\begin{table*}[htbp]
\centering
\begin{tabular}{>{\raggedright\arraybackslash}p{0.45\textwidth}>{\raggedright\arraybackslash}p{0.45\textwidth}}
\toprule
\textbf{Positive Example} & \textbf{Negative Example} \\
\midrule
\posex{Only Bill would ever complain.} & \negex{Even Bill would ever complain.} \\
\posex{Diane watched Alan.} & \negex{Diane screamed Alan.} \\
\posex{Who should Derek hug after shocking Richard?} & \negex{Who should Derek hug Richard after shocking?} \\
\bottomrule
\end{tabular}
\caption{Examples from the BLiMP dataset \cite{warstadt-etal-2020-blimp-benchmark}: matched pairs of grammatical (positive, left) and ungrammatical (negative, right) sentences. We expect the language model to assign a higher probability to the grammatical sentence in each pair.}
\label{tab:blimp}
\end{table*}

\begin{table*}[htbp]
\centering
\renewcommand{\arraystretch}{1.3} 
\begin{tabular}{p{0.8\textwidth}}
\toprule
\textbf{Examples} \\
\midrule
Before the meeting, Mary wrote down the following list of words:  
\textbf{window, door, roof}. After the meeting, she took a break and had a cup of coffee. When she got back, she read the list again:  
\textbf{window, door, roof}. \\
\midrule
Before the meeting, John wrote down the following list of words: \textbf{nothing, riches, paper}. After the meeting, he took a break and had a cup of coffee. When he got back, he read the list again: \textbf{nothing, riches, paper}. \\
\bottomrule
\end{tabular}
\caption{Verbatim in-context retrieval \cite{armeni-etal-2022-characterizing,armeni-etal-2024-transformer} examples. We expect a good language model to recognize based on the context that the list is repeated, retrieve the appropriate items from the first repetition of the list, and assign these items a very high probability.}
\label{tab:verbatim}
\end{table*}

\begin{figure*}[htbp] 
    \centering 
    \includegraphics[width=\textwidth]{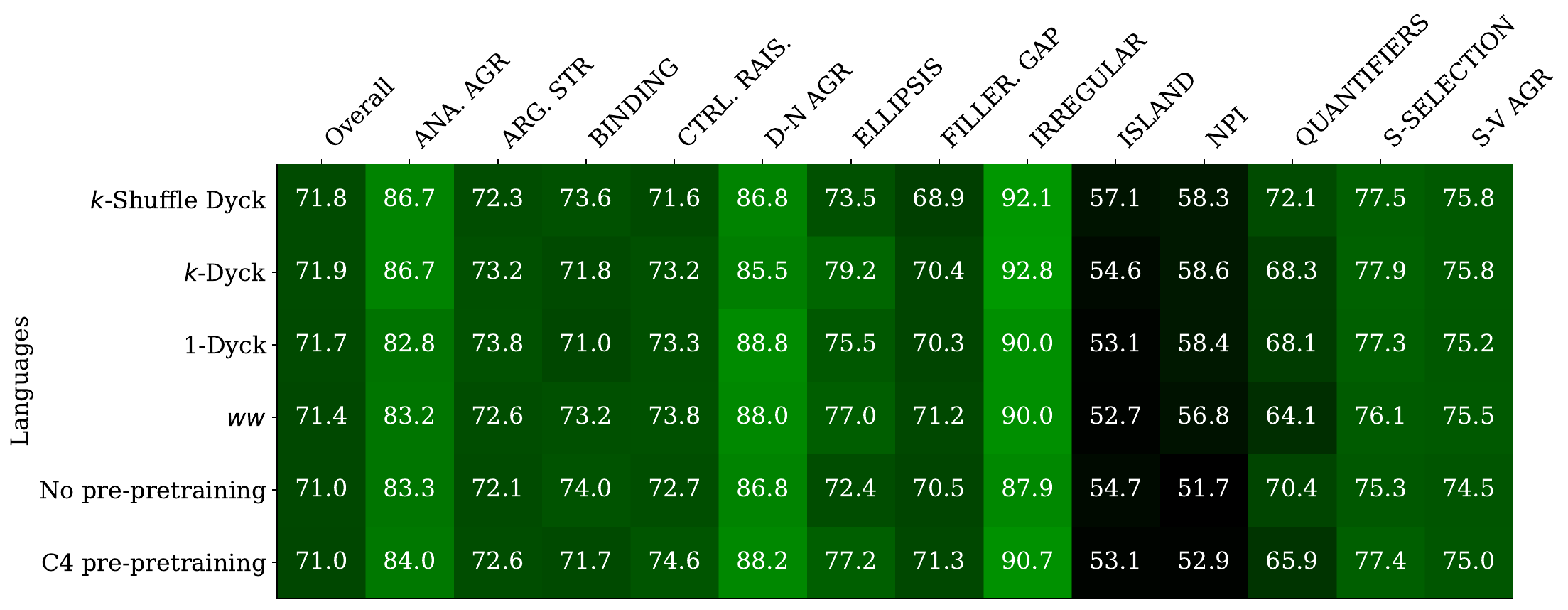} 
    \caption{Accuracy on BLiMP by grammatical phenomenon. The full names of the phenomena are: anaphor agreement, argument structure, binding, control/raising, determiner-noun agreement, ellipsis, filler-gap dependencies, irregular forms, island effects, negative polarity item licensing, quantifiers, and subject-verb agreement.} 
    \label{fig:blimp-accuracies} 
\end{figure*}

\begin{figure*}[htbp] 
    \centering 
    \includegraphics[width=\textwidth]{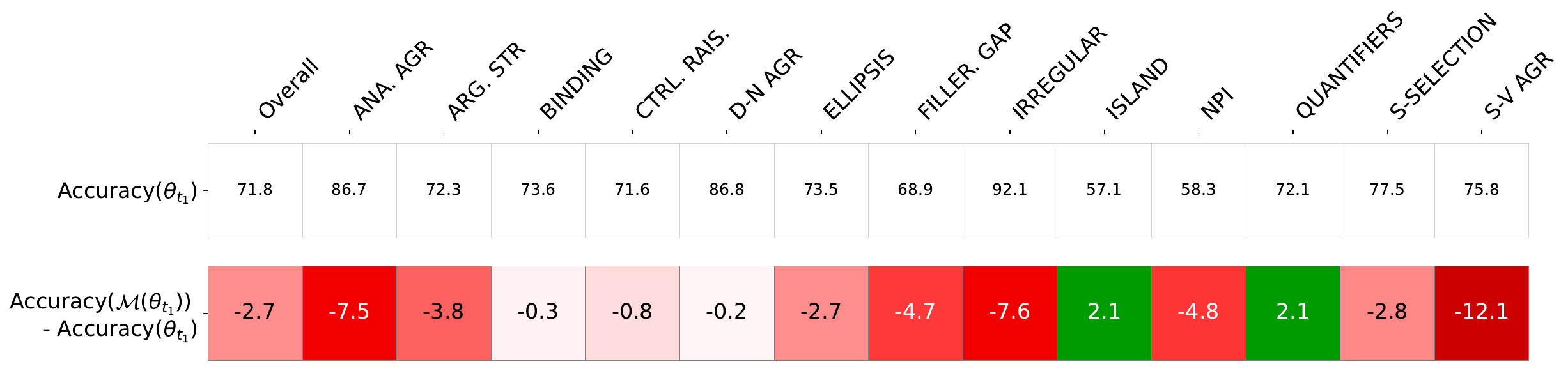} 
    \caption{Performance changes on BLiMP after pruning half the attention heads from the model trained on $k$-Shuffle Dyck (see \S\ref{sec:pruning}). The largest declines are on subject-verb agreement, irregular forms, and anaphor agreement. These categories require knowledge about word forms, and the sentences within these categories are generally simple (around 4 words).} 
    \label{fig:blimp-mask-delta} 
\end{figure*}

\begin{table*}[ht]
\centering

\begin{tabularx}{\textwidth}{ll cccc c}
\toprule
& \textbf{Language} & \textbf{LM Loss} & \textbf{Documents $\downarrow$} & \textbf{MRS} & \textbf{Grammaticality} & \textbf{Retrieval} \\ 
\midrule
\multirow{5}{*}{Formal} & 1-Dyck & {3.760 {\scriptsize{$\pm$0.016}}} & {0.978 {\scriptsize{$\pm$0.021}}} & 3.01 & {0.717 {\scriptsize{$\pm$0.004}}} & {3.373 {\scriptsize{$\pm$0.014}}} \\ 
& $k$-Dyck & {3.743 {\scriptsize{$\pm$0.016}}} & \bfseries{0.998 {\scriptsize{$\pm$0.001}}} & 3.57 & \bfseries{0.719 {\scriptsize{$\pm$0.003}}} & {3.338 {\scriptsize{$\pm$0.019}}} \\ 
& $k$-Shuffle Dyck & \bfseries{3.741 {\scriptsize{$\pm$0.014}}} & \bfseries{0.998 {\scriptsize{$\pm$0.001}}} & \bfseries{7.15} & {0.718 {\scriptsize{$\pm$0.007}}} & \bfseries{3.297 {\scriptsize{$\pm$0.012}}} \\
& $ww$ & {3.792 {\scriptsize{$\pm$0.018}}} & {0.557 {\scriptsize{$\pm$0.247}}} & {$-$}0.25 & {0.714 {\scriptsize{$\pm$0.003}}} & {3.341 {\scriptsize{$\pm$0.021}}} \\
\midrule
\multirow{4}{*}{Controls} & No pre-pretraining & {3.780 {\scriptsize{$\pm$0.018}}} & {---} & {---} & {0.710 {\scriptsize{$\pm$0.011}}} & {3.393 {\scriptsize{$\pm$0.003}}} \\ 
& C4 pre-pretraining & {3.754 {\scriptsize{$\pm$0.017}}} & {0.992 {\scriptsize{$\pm$0.007}}} & 6.65 & {0.710 {\scriptsize{$\pm$0.003}}} & {3.354 {\scriptsize{$\pm$0.005}}} \\ 
& Random binary & {3.810 {\scriptsize{$\pm$0.015}}} & {0 {\scriptsize{$\pm$0}}} & {$-$}6.60 & {0.712 {\scriptsize{$\pm$0.004}}} & {3.416 {\scriptsize{$\pm$0.016}}} \\ 
& Random ints & {3.798 {\scriptsize{$\pm$0.015}}} & {0.042 {\scriptsize{$\pm$0.041}}} & {$-$}5.97 & {0.712 {\scriptsize{$\pm$0.006}}} & {3.409 {\scriptsize{$\pm$0.006}}} \\ 
\bottomrule
\end{tabularx}

\caption{Evaluating models at the optimal amount of pre-pretraining $t_0^*$ for each formal language  (see \S\ref{sec:expressivity}). ``Documents $\downarrow$'' is the proportion of documents in the C4 validation set where the model has a lower loss than the model trained without pre-pretraining. 1-Dyck, $k$-Dyck, and $k$-Shuffle-Dyck all have marginal rates of substitution (MRS) greater than~1, indicating that pre-pretraining is more efficient than not pre-pretraining. $k$-Shuffle-Dyck performs the best overall on our evaluation metrics.}
\label{tab:ppt-results}
\end{table*}

\begin{figure*}[t]
\renewcommand{\lstlistingname}{Code} 
\caption{Implementation of a $k$-Shuffle Dyck sequence generator.}
\label{code:shuff-dyck}
\begin{lstlisting}[language=Python]
import random


def generate_shuff_dyck(k, max_length=2048, p_open=0.5, max_depth=16):
    """
    Generate a k-shuffle Dyck sequence, truncated at max_length.
    When max depth is reached, close one bracket and continue.

    Args:
        k (int): Number of different types of brackets
        max_length (int): Target maximum length of the sequence
        p_open (float): Probability of opening a new bracket
        max_depth (int): Maximum nesting depth allowed

    Returns:
        list: Generated sequence where i represents opening bracket i
             and i+k represents closing bracket i

    Note: the final Dyck word may be invalid due to truncation, but
    we didn't find this to be an issue in practice.
    """
    sequence = []
    counts = [0] * k  # Track open brackets of each type

    while len(sequence) < max_length:
        depth = sum(counts)

        # Must open if all brackets are closed
        if depth == 0:
            bracket = random.randint(0, k - 1)
            sequence.append(bracket)
            counts[bracket] += 1
            continue

        # If at max depth, force a close
        if depth >= max_depth:
            open_brackets = [i for i, count in enumerate(counts) if count > 0]
            bracket = random.choice(open_brackets)
            sequence.append(bracket + k)
            counts[bracket] -= 1
            continue

        # Randomly choose to open or close
        if random.random() < p_open and depth < max_depth:
            bracket = random.randint(0, k - 1)
            sequence.append(bracket)
            counts[bracket] += 1
        else:
            # Close an existing bracket
            open_brackets = [i for i, count in enumerate(counts) if count > 0]
            bracket = random.choice(open_brackets)
            sequence.append(bracket + k)
            counts[bracket] -= 1

    return sequence

\end{lstlisting}
\end{figure*}

\end{document}